\documentclass{article}
\usepackage[arxiv]{kf-basic}

\usepackage[round]{natbib}

\usepackage[mode=tex]{standalone}
\usepackage{tikz}
\usetikzlibrary{calc}
\usepackage{pgfplots}
\pgfplotsset{compat=newest}
\usepgfplotslibrary{colormaps,patchplots}
\usepackage{graphicx}
\usetikzlibrary{arrows.meta,positioning,fit,calc,decorations.pathreplacing,backgrounds}
\usepackage{subcaption}

\title{Provable Target Sample Complexity Improvements as Pre‑Trained Models Scale}
\author[1,2]{Kazuto Fukuchi\thanks{fukuchi@cs.tsukuba.ac.jp, Corresponding author}}
\author[3,4]{Ryuichiro Hataya}
\author[4,5,6]{Kota Matsui}
\affil[1]{University of Tsukuba, Japan}
\affil[2]{RIKEN AIP, Japan}
\affil[3]{SB Intuitions Corp., Japan}
\affil[4]{Kyoto University, Japan}
\affil[5]{Shiga University, Japan}
\affil[6]{Institute of Science Tokyo, Japan}

\begin{document}

\maketitle

\begin{abstract}
  Pre-trained models have become indispensable for efficiently building models across a broad spectrum of downstream tasks. The advantages of pre-trained models have been highlighted by empirical studies on scaling laws, which demonstrate that larger pre-trained models can significantly reduce the sample complexity of downstream learning. However, existing theoretical investigations of pre-trained models lack the capability to explain this phenomenon. In this paper, we provide a theoretical investigation by introducing a novel framework, {\em caulking}, inspired by parameter-efficient fine-tuning (PEFT) methods such as adapter-based fine-tuning, low-rank adaptation, and partial fine-tuning. Our analysis establishes that improved pre-trained models provably decrease the sample complexity of downstream tasks, thereby offering theoretical justification for the empirically observed scaling laws relating pre-trained model size to downstream performance, a relationship not covered by existing results.
\end{abstract}

\section{Introduction}
The utilization of pre-trained models across diverse domains has become a prevalent strategy for developing models tailored to specific applications.
This approach enables the construction of highly accurate models even in scenarios where domain-specific data is limited.
For instance, in medical image recognition, numerous pre-trained models have been developed for a variety of tasks, such as disease diagnosis and the identification of diseased regions~\citep{wen2021rethinking}.
More recently, foundation models that handle different modalities, such as chest X-ray and brain CT images, within a unified framework have also been developed~\citep{azad2023foundational, wang2025self}.
Moreover, in fields such as drug discovery and materials science, pre-trained and foundation models capable of handling chemical structures are becoming powerful tools~\citep{xia2022systematic,pyzer2025foundation}.

The advantage of leveraging large pre-trained models has been underscored by empirical studies on {\em scaling laws}~\citep{henighanScalingLawsAutoregressive2020,mikamiScalingLawSyn2real2023}.
Scaling laws were first conceptualized by \citet{kaplanScalingLawsNeural2020} in the context of large language models~(LLMs), demonstrating that the performance of LLMs scales with model size, dataset size, and the amount of compute used for training.
The scaling laws of pre-trained models were further investigated by \citet{henighanScalingLawsAutoregressive2020} in the context of pre-trained autoregressive models, showing that larger pre-trained models can significantly reduce the sample complexity for fine-tuning downstream tasks.
\citet{mikamiScalingLawSyn2real2023} also demonstrated analogous scaling laws for pre-trained models in the context of synthetic-to-real transfer learning, showing that the performance of pre-trained models scales with the amount of pre-training data.
A special attention of this paper is {\em data scaling laws} of pre-trained models, where the performance for downstream tasks scales with the amount of pre-training data.

\begin{figure*}[t]
  \centering
  \begin{tikzpicture}[baseline={(0,0)}]
  \tikzset{
    model/.style={draw,thick,minimum width=7cm,minimum height=1.0cm,fill=blue!5},
    underlying/.style={draw,thick,minimum width=7cm,minimum height=1.0cm,fill=red!5},
    space/.style={draw,semithick,minimum height=1.0cm,fill=white},
    newadapter/.style={draw,semithick,minimum height=1.0cm,fill=green!20},
    repl/.style={-Latex,thick},
    divider/.style={draw,semithick}
  }


  \node[model, label={[align=center]above:Pre-trained model}] (modelA) {};
  \node[space, minimum width=.6*7cm] (spaceA) at (modelA.center) {};
  \node (adapterLabelA) at ($(modelA.west)!0.5!(modelA.east)$) {};
  \node[model, below=0.5cm of modelA.south] (model2A) {};
  \node[newadapter, minimum width=.6*7cm] (newA) at (model2A.center) {Adapter};
  \draw[repl] ($(modelA.south west)!0.5!(modelA.south east)$) to node[midway,right,xshift=4pt]{Caulking (replace)} (newA.north);
  \node[below=0.25cm of model2A.south, anchor=center, rotate=90] (equalA) {$\approx$};
  \node[underlying, below=0.25cm of equalA.center] (underlyingA) {underlying function};

  \node[model, right=2.5cm of modelA, label={[align=center]above:Pre-trained model}] (modelB) {};
  \node[space, minimum width=.4*7cm] (spaceB) at (modelB.center) {};
  \node (adapterLabelB) at ($(modelB.west)!0.5!(modelB.east)$) {};
  \node[model, below=0.5cm of modelB.south] (model2B) {};
  \node[newadapter, minimum width=.4*7cm] (newB) at (model2B.center) {Adapter};
  \draw[repl] ($(modelB.south west)!0.5!(modelB.south east)$) to node[midway,right,xshift=4pt]{Caulking (replace)} (newB.north);
  \node[below=0.25cm of model2B.south, anchor=center, rotate=90] (equalB) {$\approx$};
  \node[underlying, below=0.25cm of equalB.center] (underlyingB) {underlying function};

  \node[model, right=2.5cm of modelB, label={[align=center]above:Pre-trained model}] (modelC) {};

  \node[space, minimum width=.2*7cm] (spaceC) at (modelC.center) {};
  \node (adapterLabelC) at ($(modelC.west)!0.5!(modelC.east)$) {};
  \node[model, below=0.5cm of modelC.south] (model2C) {};
  \node[newadapter, minimum width=.2*7cm] (newC) at (model2C.center) {Adapter};
  \draw[repl] ($(modelC.south west)!0.5!(modelC.south east)$) to node[midway,right,xshift=4pt]{Caulking (replace)} (newC.north);
  \node[below=0.25cm of model2C.south, anchor=center, rotate=90] (equalC) {$\approx$};
  \node[underlying, below=0.25cm of equalC.center] (underlyingC) {underlying function};

  \draw[repl] ($(modelA.north west)+(0,0.8cm)$) -- node[above,align=center]{Pre-trained model scale} node[pos=0,above,anchor=south west,align=center]{Small $m$} node[pos=1,above,anchor=south east,align=center]{Large $m$} ($(modelC.north east)+(0,0.8cm)$);

  \begin{scope}[on background layer]
    \draw[repl] ([xshift=-0.5cm]modelA.west) -- ([xshift=+0.5cm]modelA.east);
    \draw[repl] ([xshift=-0.5cm]modelB.west) -- ([xshift=+0.5cm]modelB.east);
    \draw[repl] ([xshift=-0.5cm]modelC.west) -- ([xshift=+0.5cm]modelC.east);

    \draw[repl] ([xshift=-0.5cm]model2A.west) -- ([xshift=+0.5cm]model2A.east);
    \draw[repl] ([xshift=-0.5cm]model2B.west) -- ([xshift=+0.5cm]model2B.east);
    \draw[repl] ([xshift=-0.5cm]model2C.west) -- ([xshift=+0.5cm]model2C.east);

    \draw[repl] ([xshift=-0.5cm]underlyingA.west) -- ([xshift=+0.5cm]underlyingA.east);
    \draw[repl] ([xshift=-0.5cm]underlyingB.west) -- ([xshift=+0.5cm]underlyingB.east);
    \draw[repl] ([xshift=-0.5cm]underlyingC.west) -- ([xshift=+0.5cm]underlyingC.east);

  \end{scope}

\end{tikzpicture}
  \caption{A conceptual illustration of caulking. Blue boxes represent pre-trained models, and red boxes represent underlying functions. The horizontal axis represents the source sample size $m$, which corresponds to the scale of the pre-trained model.}
  \label{fig:caulking}
\end{figure*}
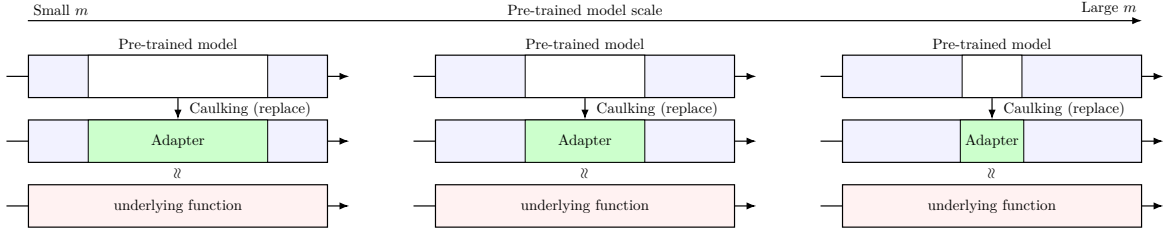

The effectiveness of pre-trained models is also supported by substantial theoretical research, including work on few-shot learning~\citep{duFewShotLearningLearning2020} and in-context learning~\citep{baiTransformersStatisticiansProvable2023,kimTransformersAreMinimax2024}.
Most of these studies attempt to demonstrate the advantage of pre-trained models by establishing an upper bound on the error of the learning algorithm, characterized by the source sample size $m$~(used to construct the pre-trained model) and the target sample size $n$~(sample size for the downstream task).
Specifically, the error rate obtained in these analyses is often expressed as:
\begin{align}
  \mathbb{E}[E_Q(f_n)] \le m^{-\alpha} + n^{-\beta}, \label{eq:rate-exist}
\end{align}
where $E_Q(f_n)$ denotes the error of the estimated regression function $f_n$ under the target distribution $Q$ of the downstream task, and $\alpha,\beta > 0$ are constants that depend on the complexity of the underlying function class and the learning algorithm.
If $\beta$ is larger than the rate achievable by training from scratch, \zcref{eq:rate-exist} implies that the pre-trained model is beneficial, provided that $m$ is sufficiently large such that $m^{-\alpha} \le n^{-\beta}$.

However, these theoretical results have limited capacity to explain the data scaling laws of pre-trained models.
Within the framework of the theoretical analyses in \zcref{eq:rate-exist}, the data scaling law for pre-trained models can be interpreted as the increase in source sample size $m$~(corresponding to the scale of the pre-trained model) leading to improved sample complexity with respect to the target sample size $n$~(the sample complexity of the downstream task).
Yet, the sample complexity for $n$ is characterized by $\beta$ in \zcref{eq:rate-exist}, which remains invariant to the scale of the pre-trained model, $m$.
Thus, the existing results showing \zcref{eq:rate-exist} do not fully capture the data scaling laws of pre-trained models.

To address this gap, this paper aims to explore the following critical research question:
\begin{quote}
  What properties of a pre-trained model can ensure improved sample complexity for the target sample size as the source sample size increases?
\end{quote}
Identifying such characteristics of pre-trained models can lead to a more nuanced understanding of their recent success, particularly the data scaling laws of pre-trained models, and may result in the development of more efficient methodologies for constructing pre-trained models.

\paragraph{Contributions.}
The primary contribution of this paper is to clarify the properties of pre-trained models that give rise to the data scaling laws observed in practice.
Specifically, we show that under a certain assumption on the pre-trained model, the error rate satisfies
\begin{align}
  \mathbb{E}[E_Q(f_n)] \le n^{-\beta + o(1)}, \label{eq:rate-our}
\end{align}
for some constant $\beta > 0$, where $o(1)$ is a finite term that decreases with $m$ and vanishes as $m \to \infty$.
The rate in \zcref{eq:rate-our} demonstrates that the sample complexity with respect to $n$ improves as the source sample size $m$ increases, since the exponent of $n$ decreases with $m$.
Thus, the bound in \zcref{eq:rate-our} provides a theoretical explanation for the data scaling laws of pre-trained models.

To achieve the rate in \zcref{eq:rate-our}, we introduce a novel concept called {\em caulkability}, which is conceptually illustrated in \zcref{fig:caulking}.
Caulkability requires that the pre-trained model~(blue boxes in \zcref{fig:caulking}) possesses a hierarchical structure, such that the underlying function~(red boxes in \zcref{fig:caulking}) can be approximated by inserting another function, termed the {\em adapter model}~(green boxes in \zcref{fig:caulking}), into this hierarchy.
We demonstrate that if the complexity of the adapter model decreases as the source sample size $m$ increases~(i.e., the depth of the adapter model shrinks along the horizontal axis representing $m$ in \zcref{fig:caulking}), it is possible to construct a learning algorithm, called {\em empirical caulking}, that achieves the error rate in \zcref{eq:rate-our}.
This result shows that reducing the complexity of the adapter model is a sufficient condition for attaining the rate in \zcref{eq:rate-our}, thereby answering the research question above.
This also highlights that parameter-efficient fine-tuning (PEFT) methods can be effective in learning with pre-trained models, as we can interpret leveraging low complexity adapter models as a form of PEFT.

We further provide empirical evidence that supports our theoretical results.
In particular, our experiments on fine-tuning CNNs and integrating vision capabilities into LLMs demonstrate that larger pre-trained models can be adapted to downstream tasks using shallower adapter models.
These findings reinforce our theoretical claim that adapting larger pre-trained models leads to a reduction in the sample complexity required for downstream tasks.

All missing proofs are deferred to \zcref{sec:missing-proofs}.

\section{Related Work}

\paragraph{Domain Adaptation Theory.} Domain adaptation is a major subfield of transfer learning that addresses the scenario where the training (source) data and test (target) data come from different distributions even though the learning task is the same~\citep{redko2020survey}.
Domain adaptation is, in essence, a transfer-learning approach that leverages abundant source-domain data together with a small amount of explicitly provided target-domain data (either labeled or unlabeled), and theoretical analyses are often conducted under this assumption.
A rich theory has been developed to understand the generalization performance under domain shift.
Seminal work by \citet{ben2006analysis,ben2010theory} introduced generalization bounds for domain adaptation that relate the target error to three components, namely source domain error, domain discrepancy, and an irreducible term (known as the joint error).
The general bound can be stated informally as: for any hypothesis $h$ in a given class, the target risk $R_T(h)$ is bounded by the source risk $R_S(h)$ plus a discrepancy between the domains and a constant term for labeling mismatch.
This result implies that if two domains are very similar (small discrepancy) and also share an underlying label function, then a model will generalize well across domains, whereas large distribution gaps or conflicting label definitions will lead to a larger generalization error.
Numerous theoretical results have refined these ideas.
Researchers have proposed various discrepancy metrics for different settings, e.g., the symmetric difference $\mathcal{H}$-divergence for classification and regression~\citep{ben2010theory}, and extended the theory to multi-source adaptation and other scenarios~\citep{mansour2008domain}.
Other work analyzes, e.g., maximum mean discrepancy~\citep{redko2019advances} or the Wasserstein distance~\citep{shen2018wasserstein}, with their own theoretical guarantees.

Apart from discrepancy-based theories, alternative approaches have also been explored.
For instance, \cite{kpotufe2017lipschitz, ma2023optimally, feng2023towards} investigate upper bounds on the error of methods that account for distribution shift between the source and target domains using density ratios.
In addition, \cite{kpotufe2021marginal,pathak2022new,galbraith2023classification,fujikawaHarnessingPowerVicinityInformed2025} attempt to analyze error bounds using a similarity measure based on hyperspheres in a metric space.
While these analyses appear to yield sound error bounds, several limitations can be pointed out.
For example, in density-ratio–based methods, the analysis is often carried out under the assumption that the true density ratio is known; in practice, it is unknown, so the estimation error is not taken into account.
Moreover, in both approaches, when the supports of the source and target data distributions are significantly mismatched, the similarity measure can diverge.
Therefore, existing domain adaptation theory is insufficient as a tool for analyzing the error of pre-trained models; new tools need to be developed.

\paragraph{Deep Learning Theory for Pre-trained Models.}
There has been substantial theoretical progress in understanding the statistical performance of deep learning models~\citep{schmidt-hieberNonparametricRegressionUsing2020,suzukiDeepLearningAdaptive2021,kohlerRateConvergenceFully2021a,suhApproximationNonparametricEstimation2022,nishikawaNonlinearTransformersCan2025,damianNeuralNetworksCan2022a,damianSmoothingLandscapeBoosts2023}.
This line of research has been further extended to analyze the statistical properties of pre-trained models, particularly in establishing the error rate in \zcref{eq:rate-exist}.
For instance, \citet{duFewShotLearningLearning2020} study few-shot learning problems within the linear feature class and linear outcome model, demonstrating the error rate of the form in \zcref{eq:rate-exist}.
\citet{baiTransformersStatisticiansProvable2023} investigate the statistical performance of transformers for in-context learning under the linear regression model, also obtaining the form in \zcref{eq:rate-exist}.
\citet{kimTransformersAreMinimax2024} analyze transformers for in-context learning in the Besov space, again showing the error rate of the form in \zcref{eq:rate-exist}.
As discussed in the introduction, these results share the form in \zcref{eq:rate-exist} and therefore do not account for the scaling laws observed in pre-trained models.

\paragraph{Large Language Models.}
The proposed concept of caulking is closely related to the recent advances in large language models~(LLMs).
In particular, recent training methods for multimodal language models~\citep{liBLIP2BootstrappingLanguageImage2023,baiQwenVLVersatileVisionLanguage2023,liuVisualInstructionTuning2023,daiInstructBLIPGeneralpurposeVisionLanguage2023,gosthipaty2024nanovlm,liu2025nvila} and parameter-efficient fine-tuning (PEFT) techniques~\citep{houlsbyParameterEfficientTransferLearning2019,guoParameterEfficientTransferLearning2021,huLoRALowRankAdaptation2021} can be viewed as analogous to caulking.
For instance, when constructing a vision-language model using a pre-trained visual feature extractor and an LLM, a typical strategy is to learn a mapping from visual features to textual features, referred to as an adapter model, using an image-text pair dataset.
The final vision-language model is then formed by inserting the learned adapter model between the visual feature extractor and the LLM, which is analogous to the caulking process.
Similarly, PEFT methods involve replacing or augmenting the pre-trained model with a component of lower complexity, which is also analogous to caulking.

\section{Learning with Pre-trained Model via Caulking}\label{sec:setup}

\paragraph{Notations.}
Let $\mathcal{E} \in \mathfrak{Z}$ be an event in a probability space $(\mathcal{Z}, \mathfrak{Z}, \nu)$.
The complement of $\mathcal{E}$ is denoted by $\mathcal{E}^c$, and its probability by $\mathbb{P}_\nu\Bab{\mathcal{E}}$.
For a random variable $X$ defined on $(\mathcal{Z}, \mathfrak{Z}, \nu)$ with values in a measurable space $(\mathcal{X}, \mathfrak{X})$, its expectation is denoted by $\mathbb{E}_\nu[X]$.
For $p \in [1, \infty)$, the $L^p$-norm of a function $f:\mathcal{Z}\to\mathbb{R}$ is denoted by $\Vab{f}_{L^p(\mathcal{Z})} = (\int_{\mathcal{Z}} |f|^p d\lambda)^{1/p}$, where $\mathcal{Z} \subseteq \mathbb{R}^d$ for some $d \in \mathbb{N}$ and $\lambda$ is the Lebesgue measure.
For a measure $\nu$ on a measurable space $(\mathcal{Z}, \mathfrak{Z})$, the $L^p(\nu)$-norm of $f:\mathcal{Z}\to\mathbb{R}$ is defined as $\Vab{f}_{L^p(\nu)} = (\int |f|^p d\nu)^{1/p}$ for $p \in [1, \infty)$ and $\Vab{f}_{L^\infty(\nu)} = \inf\Bab{b > 0 : \nu(\Bab{z \in \mathcal{Z} : |f(z)| \le b})=1}$.

\paragraph{Problem Setup.}
We consider a nonparametric regression problem in the presence of a pre-trained model.
Let $X \in \mathcal{X}$ and $Y \in \Omega \subset \mathbb{R}$ denote the covariates and outcome, respectively, and let $P$ and $Q$ denote the source and target distributions.
Under the target distribution $Q$, assume that $X$ and $Y$ follow the nonparametric regression model:
\begin{align}
  Y = f^*(X) + \xi,
\end{align}
where $f^*$ is the regression function of interest and $\xi$ is a zero-mean noise term independent of $X$.
Suppose the learner has access to a pre-trained model $f_{\mathrm{pre}}$ constructed from a source sample of size $m$ drawn from $P$.
The learner's objective is to estimate $f^*$ using a target sample of size $n$ from $Q$ in conjunction with the pre-trained model $f_{\mathrm{pre}}$.
Let $f_n$ denote the estimated regression function.
Then, the accuracy of $f_n$ is evaluated via the $L^2(Q_X)$ distance from $f^*$, defined as
\begin{align}
  E_Q(f) = \Vab{f - f^*}_{L^2(Q_X)}^2,
\end{align}
where $Q_X$ is the marginal distribution of $X$ under $Q$.

\paragraph{Motivating Example.}

\begin{figure}[t]
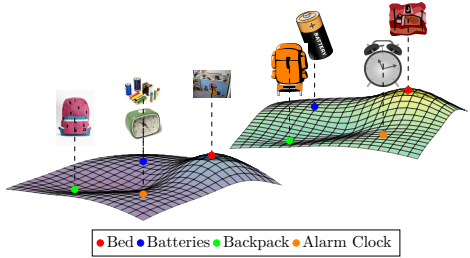

  \centering
  \includestandalone[width=0.5\textwidth]{manifolds}
  \caption{A motivating example illustrating the successful utilization of a pre-trained model.}
  \label{fig:manifolds}
\end{figure}

\zcref{fig:manifolds} illustrates a conceptual example that demonstrates the effective use of a pre-trained model, using images inspired by the Office-Home dataset~\citep{venkateswaraDeepHashingNetwork2017}.
The figure consists of two manifolds, each representing the probability that an image is classified as Bed~($f^*$), based on the feature space for two domains: Real-world~($P$) and Clipart~($Q$).
Points on the manifolds correspond to features, with images connected by dashed lines.
The manifolds differ only by rotations and translations.
A pre-trained model is constructed to capture the complex structure of the left manifold using real-world images (source sample).
Subsequently, an accurate estimate of $f^*$ for the clipart images (target sample) on the left manifold can be obtained by aligning the right manifold with the left one through appropriate transformations.
Learning the complex manifold structure from scratch requires a larger sample size than learning only the rotations and translations (i.e., linear transformations).
Therefore, leveraging the pre-trained model enables high predictive accuracy with a small target sample, highlighting the potential effectiveness of pre-trained model utilization.

\paragraph{Caulking.}
The scenario depicted in \zcref{fig:manifolds} can be described by a hierarchical structure, formalized by the concept of caulkability.
\begin{definition}[Caulkability]\label{def:caulkability}
  Let $\mathcal{G}$ be a class of functions $g:\mathcal{Z}_i \to \mathcal{Z}_o$, where $\mathcal{Z}_i$ and $\mathcal{Z}_o$ are arbitrary domains.
  Given $\epsilon > 0$ and $\mathcal{G}$, a regressor $f^*:\mathcal{X}\to\mathbb{R}$ is $(\epsilon, \mathcal{G})$-caulkable by $(g_h, g_e)$, where $g_e:\mathcal{X}\to\mathcal{Z}_i$ and $g_h:\mathcal{Z}_o\to\mathbb{R}$, with respect to the $L^2(Q_X)$-norm if
  \begin{align}
    \inf_{g_a \in \mathcal{G}} \Vab{f^* - g_h \circ g_a \circ g_e}_{L^2(Q_X)} \le \epsilon. \label{eq:caulkability}
  \end{align}
\end{definition}
To successfully leverage the pre-trained model, we assume that the ideal regressor $f^*$ is caulkable by the pre-trained model $f_{\mathrm{pre}} = (g_h, g_e)$.
In \zcref{def:caulkability}, $g_e$ denotes a mapping from the image to features, corresponding to the dashed lines in \zcref{fig:manifolds}; the manifold in \zcref{fig:manifolds} is parameterized by these features, where the height corresponds to the output of $g_h$ and the spatial position corresponds to its input; and $g_a$ denotes a domain-specific transformation, corresponding to the rotations and translations for the left and right manifolds in \zcref{fig:manifolds}, respectively.
\zcref{eq:caulkability} indicates that the ideal regressor $f^*$ can be (approximately) recovered by inserting a target-specific transformation $g_a$ into the pre-trained model $f_{\mathrm{pre}}$.
We refer to this property of the pre-trained model as caulkability, expressing the situation where the ideal regressor $f^*$ is recovered by filling the gap between $g_h$ and $g_e$.
We refer to $g_h$ and $g_e$ as the {\em head model} and {\em feature extractor model}, respectively, and to $g_a$ as the {\em adapter model}.

\paragraph{Empirical Caulking.}
We propose a learning framework that leverages the pre-trained model, termed {\em empirical caulking}.
Let $(X_1, Y_1), ..., (X_n, Y_n)$ be a target sample drawn from $Q$. Suppose that the ideal regressor $f^*$ is $(\epsilon_n, \mathcal{G})$-caulkable by the pre-trained model $f_{\mathrm{pre}} = (g_h, g_e)$, where $\epsilon_n$ is a constant possibly depending on the target sample size $n$, and $\mathcal{G}$ is a given class of functions.
We design a learning algorithm for $f_n$ by inserting the fine-tuned adapter model $g_a$ into the pre-trained model $f_{\mathrm{pre}}$.
Given a class $\mathcal{G}_n \subseteq \mathcal{G}$ of possible $g_a$, potentially depending on $n$, define $\mathcal{F}_n = \Bab{g_h \circ g_a \circ g_e : g_a \in \mathcal{G}_n}$.
$\mathcal{F}_n$ thus represents the set of functions obtainable by inserting an adapter model $g_a \in \mathcal{G}_n$ into the pre-trained model $f_{\mathrm{pre}}$.
The estimated regressor $f_n$ is then defined as
\begin{align}
  f_n = \argmin_{f \in \mathcal{F}_n} \frac{1}{n} \sum_{i=1}^n \ab(f(X_i) - Y_i)^2.
\end{align}
This learning algorithm is equivalent to the least squares estimator, but restricted to the specific function class $\mathcal{F}_n$.
\begin{remark}
  Our current formulation of empirical caulking covers the most parameter-efficient fine-tuning methods~(PEFTs), whereas it cannot directly cover the case of LoRA~\citep{huLoRALowRankAdaptation2021}.
  Nevertheless, LoRA can be interpreted as a method for inserting several layers into the pre-trained model.
  Specifically, letting $g_H\circ \cdots \circ g_1$ be an $H$-layer pre-trained model, consider the modified version $h_{H+1} \circ h_H\circ \cdots \circ h_1 \circ h_0$, where $h_i(x, y) = (g_i(x), y)$ for $i = 1, ..., H$, $h_0(x) = (x, x)$, and $h_{H+1}(x, y) = x$.
  Then, the composition of the $h_i$ is equivalent to the composition of the $g_i$. LoRA can be seen as inserting adapter layers $h_{1+1/2}, ..., h_{H+1/2}$ into this modified pre-trained model.
  Specifically, we can recover LoRA by setting $h_{i + 1/2}(x, y) = (x + \Delta W y, x + \Delta W y)$, for $i = 1, ..., H$, where $\Delta W$ is low rank.
  Hence, extending our result to cover the case of inserting multiple adapter layers would cover LoRA, which is a crucial future direction of this work.
\end{remark}

\section{Error Analysis of Empirical Caulking}\label{sec:error-analysis}
In this section, we provide a rigorous analysis of the error associated with empirical caulking, utilizing an appropriate complexity measure for the class $\mathcal{G}$.
Throughout this section, we assume that $\mathcal{Z}_o$ is equipped with a norm $\Vab{\cdot}$, and we define the $L^\infty$-norm of an adapter model $g_a:\mathcal{Z}_i\to\mathcal{Z}_o$ as $\Vab*{g_a}_{L^\infty} = \sup_{z \in \mathcal{Z}_i}\Vab*{g_a(z)}$.

\paragraph{Assumption.}
We make the following assumptions throughout our analysis:
\begin{assumption}\label{asm:regularity}
  The following conditions hold:
  \begin{enumerate}
    \item The noise $\xi$ is sub-Gaussian with variance proxy $\sigma^2 < \infty$, i.e., $\mathbb{E}[\exp(\lambda \xi)] \le \exp(\lambda^2 \sigma^2/2)$ for all $\lambda \in \mathbb{R}$.
    \item $\Omega$ is bounded, that is, there exists a constant $\Delta_\Omega > 0$ such that $\sup_{x,y \in \Omega} |y - x| \le \Delta_\Omega$.
  \end{enumerate}
\end{assumption}
\zcref{asm:regularity} is standard in the literature on nonparametric regression and statistical learning theory. Similar assumptions are made in prior works such as \citet{schmidt-hieberNonparametricRegressionUsing2020} and \citet{suzukiDeepLearningAdaptive2021}, which analyze the theoretical properties of deep learning and nonparametric regression methods.

\paragraph{Complexity of $\mathcal{G}$.}
To characterize the error of empirical caulking, we adopt a complexity measure for $\mathcal{G}$ based on both the approximation error and the covering number.
Given $\delta > 0$, a class $\mathcal{G}$, and a norm $\Vab{\cdot}$ defined on $\mathcal{G}$, the $\delta$-covering number $N(\delta, \mathcal{G}, \Vab{\cdot})$ is the minimal number of balls of radius $\delta$ (with respect to $\Vab{\cdot}$) needed to cover $\mathcal{G}$.
The following definition formalizes the notion of complexity.
\begin{definition}[$\beta$-complexity]\label{def:beta-complexity}
  Let $\beta > 0$. A class $\mathcal{G}$ of functions $g:\mathcal{Z}_i\to\mathcal{Z}_o$ is $\beta$-complex if, for each $J \in \mathbb{N}$, there exists a class $\mathcal{G}_J$ such that
  \begin{align}
    \ln\ab(N(\delta, \mathcal{G}_J, \Vab{\cdot}_{L^\infty})) \le c J\mathrm{polylog}(J, 1/\delta),
  \end{align}
  and
  \begin{align}
    \sup_{g^* \in \mathcal{G}}\inf_{g \in \mathcal{G}_J} \Vab{g - g^*}_{L^\infty} \le c' J^{-\beta},
  \end{align}
  for some constants $c,c' > 0$, where $\mathrm{polylog}$ denotes a polylogarithmic factor in its arguments.
\end{definition}
\zcref{def:beta-complexity} characterizes the complexity of $\mathcal{G}$ through the trade-off between approximation capability and the complexity of the classes $\mathcal{G}_J$.
This concept is also used in the analysis of nonparametric regression error bounds for classes such as sparse ReLU networks, as in \citep{schmidt-hieberNonparametricRegressionUsing2020,suzukiDeepLearningAdaptive2021,chenNonparametricRegressionLowdimensional2022}.
A concrete application of \zcref{def:beta-complexity} is provided in \zcref{sec:example}.

\paragraph{H\"older Continuity.}
Alongside $\beta$-complexity, we employ H\"older continuity to quantify the regularity of the pre-trained model $f_{\mathrm{pre}}$.
\begin{definition}[H\"older continuity]\label{def:holder-continuity}
  Let $g$ be a function from $\mathcal{X}$ to $\mathcal{Z}$, where $\mathcal{X}$ and $\mathcal{Z}$ are equipped with norms $\Vab{\cdot}_X$ and $\Vab{\cdot}_Z$, respectively. For $\alpha \in (0,1]$, $g$ is said to be $\alpha$-H\"older continuous if there exists a constant $C > 0$ such that
  \begin{align}
    \Vab{g(x) - g(y)}_{Z} \le C \Vab{x - y}_X^\alpha,
  \end{align}
  for all $x, y \in \mathcal{X}$.
\end{definition}

\paragraph{Error Bound.}
We now present an error bound for empirical caulking when $\mathcal{G}$ is $\beta$-complex and the ideal function $f^*$ is $(\epsilon_n, \mathcal{G})$-caulkable by a pre-trained model $f_{\mathrm{pre}}$.
\begin{theorem}\label{thm:error-bound}
  Let $\alpha \in (0,1]$ and $\beta > 0$.
  Suppose that $f^*$ is $(n^{-\frac{2\alpha\beta}{2\alpha\beta+1}}, \mathcal{G})$-caulkable by $f_{\mathrm{pre}}=(g_h, g_e)$ for some $\beta$-complex class $\mathcal{G}$, and $g_h$ is $\alpha$-H\"older continuous.
  Let $f_n$ denote the estimated regressor obtained by empirical caulking with $\mathcal{G}_n = \mathcal{G}_J$, where $\mathcal{G}_J$ is as in \zcref{def:beta-complexity} and $J=\lceil n^{1/(2\alpha\beta+1)} \rceil$.
  Under \zcref{asm:regularity}, we have
  \begin{align}
    \mathbb{E}_{Q^n}[E_Q(f_n)] \le C n^{-\frac{2\alpha\beta}{2\alpha\beta+1}}\mathrm{polylog}(n),
  \end{align}
  for some constant $C > 0$.
\end{theorem}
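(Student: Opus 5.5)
We would follow the standard oracle inequality for least squares over a fixed (data-independent) class, as in \citet{schmidt-hieberNonparametricRegressionUsing2020}. Since $g_h$ and $g_e$ are fixed by the pre-trained model, $\mathcal{F}_n=\{g_h\circ g_a\circ g_e : g_a\in\mathcal{G}_J\}$ is a deterministic class. We will assume, as is implicit in the setup, that the functions in $\mathcal{F}_n$ take values in a bounded range compatible with $\Omega$. If this fails, we would clip the estimator at a level of order $\Delta_\Omega$, which does not increase the $L^2$ error.

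The first ingredient is a least-squares oracle inequality. With sub-Gaussian noise and bounded functions, this is a well-known bound (e.g.\ Lemma 4 in Schmidt-Hieber). For any $\delta\in(0,1]$,
\begin{align}
  \mathbb{E}_{Q^n}[E_Q(f_n)] \le C_1\Bigl(\inf_{f\in\mathcal{F}_n}\Vab{f-f^*}_{L^2(Q_X)}^2 + \frac{\ln N(\delta,\mathcal{F}_n,\Vab{\cdot}_{L^\infty})+1}{n} + \delta\Bigr),
\end{align}
where $C_1$ depends on $\sigma$ and $\Delta_\Omega$. Its proof is a symmetrization/chaining argument combined with a union bound over a $\delta$-net. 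The sub-Gaussian tail handles the cross term $\frac{2}{n}\sum_i\xi_i(f(X_i)-f^*(X_i))$, and a Bernstein-type comparison between empirical and population $L^2$ norms handles the remaining terms. We would either cite this result or reprove it in the appendix.

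Next we transfer complexity from $\mathcal{G}_J$ to $\mathcal{F}_n$ using H\"older continuity. For $g_a,g_a'\in\mathcal{G}_J$,
\begin{align}
  \sup_x|g_h(g_a(g_e(x)))-g_h(g_a'(g_e(x)))|\le C\Vab{g_a-g_a'}_{L^\infty}^\alpha .
\end{align}
Hence a $(\delta/C)^{1/\alpha}$-cover of $\mathcal{G}_J$ yields a $\delta$-cover of $\mathcal{F}_n$. This gives $\ln N(\delta,\mathcal{F}_n,\Vab{\cdot}_{L^\infty})\le cJ\,\mathrm{polylog}(J,1/\delta)$, where the $1/\alpha$ power is absorbed into the polylog.

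For the approximation term, let $g^*\in\mathcal{G}$ nearly attain the infimum in \zcref{eq:caulkability}, within $\epsilon_n$. Pick $g\in\mathcal{G}_J$ with $\Vab{g-g^*}_{L^\infty}\le c'J^{-\beta}$. By the triangle inequality and H\"older continuity,
\begin{align}
  \Vab{g_h\circ g\circ g_e-f^*}_{L^2(Q_X)}\le 2\epsilon_n + C(c')^\alpha J^{-\alpha\beta}.
\end{align}
Squaring, the approximation term is bounded by $8\epsilon_n^2+2C^2(c')^{2\alpha}J^{-2\alpha\beta}$.

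Finally we plug in the parameters. Set $J=\lceil n^{1/(2\alpha\beta+1)}\rceil$ and $\delta=1/n$. Then
\begin{align}
  J^{-2\alpha\beta}\le n^{-2\alpha\beta/(2\alpha\beta+1)}, \qquad J\,\mathrm{polylog}(J,n)/n\lesssim n^{-2\alpha\beta/(2\alpha\beta+1)}\mathrm{polylog}(n).
\end{align}
We also have $\epsilon_n^2\le\epsilon_n=n^{-2\alpha\beta/(2\alpha\beta+1)}$ since $\epsilon_n\le 1$. Note that the caulkability level $\epsilon_n$ is stated unsquared, so we only need $\epsilon_n^2\le\epsilon_n$ here.

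The main obstacle is the oracle inequality itself. It requires carefully handling unbounded sub-Gaussian noise while the function class is uniformly bounded, and using a data-independent net. We would control the noise cross term by a maximal inequality over the $\delta$-net, bounding $\mathbb{E}\max_j Z_j$ for sub-Gaussian $Z_j$ by $\sqrt{2\sigma^2\ln N}$ times the empirical norm. We would then finish with the standard $2ab\le a^2/2+2b^2$ absorption.
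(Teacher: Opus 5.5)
Your proposal is correct and follows essentially the same route as the paper. The paper also applies a least-squares oracle inequality with $\delta = 1/n$, which it reproves in the appendix via a $\delta$-net, a Bernstein-type comparison of empirical and population norms, and a sub-Gaussian maximal inequality. It then transfers the covering number and the approximation error from $\mathcal{G}_J$ to $\mathcal{F}_n$ through the H\"older continuity of $g_h$, and balances $J^{-2\alpha\beta}$ against $J\,\mathrm{polylog}(n)/n$.

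Your treatment of the caulkability slack $\epsilon_n$ by the triangle inequality, together with the observation $\epsilon_n^2 \le \epsilon_n$, is slightly more careful than the paper's. The paper applies its approximation proposition as if $f^* = g_h \circ g_a^* \circ g_e$ held exactly.
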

Building on \zcref{thm:error-bound}, we now examine the effectiveness of a pre-training algorithm.
Consider a pre-training algorithm that yields a pre-trained model $f_{\mathrm{pre},m}$ such that $f^*$ is caulkable with a sequence of adapter model classes $\mathcal{G}_m$ whose complexity decreases as $m$ increases.
Specifically, suppose that for each $m$, the pre-trained model $f_{\mathrm{pre},m} = (g_{h,m}, g_{e,m})$ enables $f^*$ to be $(\epsilon_n, \mathcal{G}_m)$-caulkable with a $(\beta-\gamma_m)$-complex class $\mathcal{G}_m$, where $\gamma_m$ is a bounded term that vanishes as $m$ grows.
For such a pre-training algorithm, we obtain the following corollary.
\begin{corollary}\label{cor:error-bound-for-m}
  Let $\alpha \in (0,1]$ and $\beta > 0$.
  Let $\gamma_m < \beta$ be a bounded sequence depending on $m$.
  Suppose that $f^*$ is $(n^{-\frac{2\alpha\beta}{2\alpha\beta+1} + \gamma_m}, \mathcal{G}_m)$-caulkable by $f_{\mathrm{pre},m} = (g_{h,m}, g_{e,m})$ for some $(\beta - \gamma_m)$-complex class $\mathcal{G}_m$, and $g_{h,m}$ is $\alpha$-H\"older continuous.
  Let $f_n$ denote the estimated regressor obtained by empirical caulking with $\mathcal{G}_{n} = \mathcal{G}_J$, where $\mathcal{G}_J$ is as in \zcref{def:beta-complexity} and $J=\lceil n^{1/(2\alpha(\beta-\gamma_m)+1)} \rceil$.
  Under \zcref{asm:regularity}, we have
  \begin{align}
    \mathbb{E}_{Q^n}[E_Q(f_n)] \le C n^{-\frac{2\alpha\beta}{2\alpha\beta+1}\ab(1 - \frac{\gamma_m}{\beta(\alpha(\beta- \gamma_m) + 1)})}\mathrm{polylog}(n),
  \end{align}
  for some constant $C > 0$.
\end{corollary}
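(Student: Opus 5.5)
The plan is to reduce the corollary to \zcref{thm:error-bound} with $\beta$ replaced by $\beta' := \beta - \gamma_m > 0$; this is legitimate because $\gamma_m < \beta$. The conclusion of the theorem cannot be used as a black box, however. The caulkability level assumed here, $\epsilon_n = n^{-\frac{2\alpha\beta}{2\alpha\beta+1}+\gamma_m}$, is not the level $n^{-\frac{2\alpha\beta'}{2\alpha\beta'+1}}$ that the theorem requires for the class $\mathcal{G}_m$. I will therefore go back to the intermediate oracle inequality inside the proof of \zcref{thm:error-bound}, written for a general approximation level $\epsilon$ and a general truncation index $J$:
\begin{align}
  \mathbb{E}_{Q^n}[E_Q(f_n)] \lesssim \epsilon^2 + J^{-2\alpha\beta'} + \frac{J\,\mathrm{polylog}(J,n)}{n}.
\end{align}

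This inequality should follow from the same three ingredients as in the theorem.
\begin{enumerate}
  \item The standard least-squares oracle inequality for the class $\mathcal{F}_n$, which uses \zcref{asm:regularity}: the sub-Gaussian noise and the bounded $\Omega$.
  \item The transfer of covering numbers and approximation error from $\mathcal{G}_J$ to $\mathcal{F}_n = \{g_h\circ g_a\circ g_e\}$ through the $\alpha$-H\"older continuity of $g_{h,m}$. Together with \zcref{def:beta-complexity} for the $\beta'$-complex class $\mathcal{G}_m$, this gives
  \begin{align}
    \ln N(\delta,\mathcal{F}_n,\Vab{\cdot}_{L^\infty}) \lesssim J\,\mathrm{polylog}(J,1/\delta)
  \end{align}
  and an approximation error of order $(c'J^{-\beta'})^\alpha$.
  \item The triangle inequality, which combines the previous approximation error with the caulking error $\epsilon$.
\end{enumerate}

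With the prescribed $J=\lceil n^{1/(2\alpha\beta'+1)}\rceil$, the last two terms of the oracle inequality are balanced. They both become $n^{-r'}\mathrm{polylog}(n)$, where $r' = \frac{2\alpha\beta'}{2\alpha\beta'+1}$.

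The remaining work is exponent bookkeeping. Write $r=\frac{2\alpha\beta}{2\alpha\beta+1}$ and $c = r\bigl(1-\frac{\gamma_m}{\beta(\alpha\beta'+1)}\bigr)$ for the claimed exponent. It suffices to show that both $n^{-r'}$ and $\epsilon_n^2 = n^{-2r+2\gamma_m}$ are at most $n^{-c}$.

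For the first term, a direct computation gives
\begin{align}
  r - r' = \frac{2\alpha\gamma_m}{(2\alpha\beta+1)(2\alpha\beta'+1)}, \qquad r - c = \frac{2\alpha\gamma_m}{(2\alpha\beta+1)(\alpha\beta'+1)}.
\end{align}
Since $\alpha\beta'+1 \le 2\alpha\beta'+1$, we have $r' \ge c$, so $n^{-r'} \le n^{-c}$ (taking $\gamma_m \ge 0$).

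The main obstacle is the caulking term $\epsilon_n^2$. It requires $2r - 2\gamma_m \ge c$, which is not automatic from the stated hypotheses. I would try the following, in order.
\begin{enumerate}
  \item Check precisely how $\epsilon$ enters the theorem's proof, and whether a sharper form is available, for instance a cross term of order $\epsilon\, n^{-r'/2}$.
  \item Otherwise, use the boundedness of $\gamma_m$ to handle the regime where the inequality fails by absorbing it into the constant $C$, or add the mild restriction on $\gamma_m$ that makes it hold.
\end{enumerate}
Once this exponent comparison is settled, combining the two terms gives the stated rate, with all polylogarithmic factors collected into $\mathrm{polylog}(n)$.
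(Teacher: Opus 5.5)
Your reduction, your oracle inequality $\epsilon^2+J^{-2\alpha\beta'}+J\,\mathrm{polylog}(J,n)/n$, and your comparison $r'\ge c$ are all correct. They are more careful than the paper, which gives no separate proof of this corollary and only intends it as Theorem~\ref{thm:error-bound} with $\beta$ replaced by $\beta'=\beta-\gamma_m$. Your triangle-inequality step is genuinely needed, because the paper's proof of Theorem~\ref{thm:error-bound} silently treats $f^*=g_h\circ g_a^*\circ g_e$ as exact. Also, direct substitution actually gives the factor $1-\gamma_m/(\beta(2\alpha\beta'+1))$, so the stated $\alpha\beta'+1$ is a slip that is harmless for $\gamma_m\ge 0$, exactly as your computation shows.

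The gap is the one you flagged, and neither of your first two remedies can close it, because the corollary is false as stated when $\gamma_m$ is large. Take $\alpha=\beta=1$ (so $r=2/3$), $\gamma_m=9/10$, $g_h\equiv 0$, $\mathcal{G}_m=\mathcal{G}_J=\{0\}$ (trivially $\beta'$-complex), $f^*\equiv 1$, and $\xi$ uniform on $\{\pm 1/2\}$. Then $\inf_{g_a}\|f^*-g_h\circ g_a\circ g_e\|_{L^2(Q_X)}=1\le n^{7/30}$ for every $n$, so every hypothesis holds. Yet $f_n\equiv 0$ and $E_Q(f_n)=1$, while the claimed bound is $C n^{-4/33}\mathrm{polylog}(n)\to 0$.

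More generally, $E_Q(f_n)\ge\inf_{f\in\mathcal{F}_n}\|f-f^*\|_{L^2(Q_X)}^2$, and the hypotheses control this infimum only through $\epsilon_n$. So no sharper cross-term analysis can remove the $\epsilon_n^2$ term. Absorbing it into $C$ also fails: for fixed $m$, the ratio $n^{-2r+2\gamma_m}/n^{-c}$ is a positive power of $n$ whenever $2r-2\gamma_m<c$. Boundedness of $\gamma_m$ buys uniformity in $m$, not in $n$.

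The black-box substitution the paper has in mind breaks for the same reason. Theorem~\ref{thm:error-bound} would need caulkability at level $n^{-r'}$. For $\gamma_m>0$ and $n\ge 2$, however, $n^{-r+\gamma_m}>n^{-r'}$ whenever $(2\alpha\beta+1)(2\alpha\beta'+1)>2\alpha$, e.g.\ whenever $\alpha\beta\ge 1/2$.

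So the only way to finish is your last option, strengthening the hypotheses. For instance, with $0\le\gamma_m\le\frac{\alpha\beta}{2\alpha\beta+1}=r/2$ you get $\epsilon_n^2=n^{-2(r-\gamma_m)}\le n^{-r}\le n^{-c}$. Together with $n^{-r'}\le n^{-c}$, this completes your argument. Since $\gamma_m\to 0$ in the intended application, the restriction excludes only finitely many $m$.

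The sign condition $\gamma_m\ge 0$ must also be made an explicit hypothesis, because the corollary only assumes $\gamma_m<\beta$. For $\gamma_m<0$ one has $c>r'$, so the estimation term $n^{-r'}$ is no longer dominated by $n^{-c}$.
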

\zcref{cor:error-bound-for-m} indicates that empirical caulking achieves a rate of $n^{-\frac{2\alpha\beta}{2\alpha\beta+1} + o(1)}$ up to a logarithmic factor, where $o(1)$ is a bounded term that vanishes as $m$ increases, since $\frac{\gamma_m}{\beta(\alpha(\beta- \gamma_m) + 1)}$ decreases as $\gamma_m$ decreases.
Therefore, \zcref{cor:error-bound-for-m} demonstrates that empirical caulking can improve target sample complexity as the source sample size increases, provided that $f^*$ is caulkable with a sequence of classes $\mathcal{G}_m$ of decreasing complexity.
This result offers partial theoretical support for the scaling law of sample complexity in downstream tasks observed in empirical studies.

\subsection{Example: Compositional Space}\label{sec:example}
We now present a concrete example illustrating the application of \zcref{thm:error-bound}, highlighting the advantages of caulking in scenarios where the regression function $f^*$ is a composition of smooth and sparse functions~\citep{schmidt-hieberNonparametricRegressionUsing2020,suzukiDeepLearningAdaptive2021,kohlerRateConvergenceFully2021a}.
In particular, these works consider the case where $f^* \in \mathcal{F}_H = \mathcal{F}_H \circ \cdots \circ \mathcal{F}_1$, with
\begin{align}
  \mathcal{F}_H\!\circ\!...\!\circ\!\mathcal{F}_1 = \Bab{f_H\!\circ\!...\!\circ\!f_1 :  f_i \in \mathcal{F}_i, i = 1, ..., H}, \label{eq:compositional-space}
\end{align}
where each $\mathcal{F}_i$ is a class of smooth and sparse functions, such as the sparse H\"older class~\citep{schmidt-hieberNonparametricRegressionUsing2020,kohlerRateConvergenceFully2021a} or the anisotropic Besov class~\citep{suzukiDeepLearningAdaptive2021}.
For clarity, we focus on the setting of \citet{schmidt-hieberNonparametricRegressionUsing2020}, where $\mathcal{F}_i$ consists of $\beta_i$-H\"older smooth functions, each depending on $t_i$ out of $d_i$ possible variables.
We first review the relevant existing results before demonstrating how \zcref{thm:error-bound} applies in this context.

\paragraph{Error bound without a pre-trained model.}
\citet{schmidt-hieberNonparametricRegressionUsing2020} established that the estimator $f_n$, obtained via empirical risk minimization over the class of sparse ReLU networks, satisfies the following error bound:
\begin{align}
  \mathbb{E}_{Q^n}[E_Q(f_n)] \le C \max_{i=1,...,H}n^{-\frac{2\alpha_i\beta_i}{2\alpha_i\beta_i+t_i}}\mathrm{polylog}(n), \label{eq:error-bound-without-pre-trained-model}
\end{align}
for some constant $C > 0$, where $\alpha_i = \prod_{\ell=i+1}^H\max\Bab{1, \beta_\ell}$.
Here, $\alpha_i$ reflects the (worst-case) H\"older smoothness among the functions in the composition $\mathcal{F}_H\circ\cdots\circ\mathcal{F}_i$.
Classical nonparametric regression theory tells us that the minimax error rate for the class $\mathcal{F}_i$ is $n^{-\frac{2\beta_i}{2\beta_i+t_i}}$.
The rate in \zcref{eq:error-bound-without-pre-trained-model} is thus the slowest among the component classes $\mathcal{F}_i$, with each rate further degraded by the factor $\alpha_i$ due to the compositional structure.

\paragraph{Error bound with a pre-trained model (caulking).}
Now consider the scenario where a pre-trained model $f_{\mathrm{pre}}$ is available, and $f^*$ is $(1/n, \mathcal{F}_{i_h}\circ\cdots\circ\mathcal{F}_{i_e})$-caulkable by $f_{\mathrm{pre}} = (g_{h,m}, g_{e,m})$ for some $1 < i_e < i_h < H$.
The complexity of the class $\mathcal{F}_{i_h}\circ\cdots\circ\mathcal{F}_{i_e}$ can be effectively controlled.
\begin{theorem}[\cite{schmidt-hieberNonparametricRegressionUsing2020}]\label{thm:complexity-of-comp-holder-space-by-relu}
  $\mathcal{F}_{i_h}\circ...\circ\mathcal{F}_{i_e}$ is $\min_{i=i_h,...,i_e}\frac{\alpha_i\beta_i}{\alpha_{i_h}t_i}$-complex in terms of \zcref{def:beta-complexity}.
\end{theorem}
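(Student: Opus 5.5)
The plan is to exhibit, for each $J\in\mathbb{N}$, a class $\mathcal{G}_J$ of sparse ReLU networks and verify the two conditions of \zcref{def:beta-complexity} with exponent $\beta^\circ := \min_{i}\frac{\alpha_i\beta_i}{\alpha_{i_h}t_i}$, where $i$ ranges between $i_e$ and $i_h$. I would import two ingredients from \citet{schmidt-hieberNonparametricRegressionUsing2020}. The first is an \emph{approximation bound}: a sparse ReLU network with depth $L\asymp\log N$, width $\asymp N$ and $s\asymp N\log N$ nonzero parameters bounded by one approximates every $f_{i_h}\circ\cdots\circ f_{i_e}$ in sup-norm to accuracy $\max_i N^{-\alpha_i'\beta_i/t_i}$. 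The second is the \emph{entropy bound} $\ln N(\delta,\mathcal{F}(L,p,s),\Vab{\cdot}_{L^\infty})\le (s+1)\ln(2\delta^{-1}(L+1)V^2)$, with $V=\prod_\ell(p_\ell+1)$.

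The first step is to fix the smoothness exponents. Inside the truncated composition, the relevant products are the relative ones $\alpha_i' = \prod_{\ell=i+1}^{i_h}\max\{1,\beta_\ell\}$. These equal $\alpha_i/\alpha_{i_h}$, since $\alpha_{i_h}=\prod_{\ell>i_h}\max\{1,\beta_\ell\}$. The approximation error of each component's subnetwork then propagates through the later components. Each later layer $\ell$ is $\min\{1,\beta_\ell\}$-Hölder, so an error $\eta$ at layer $i$ becomes $\eta^{\prod_{\ell=i+1}^{i_h}\min\{1,\beta_\ell\}}$ at the output. This is exactly the mechanism in Schmidt-Hieber's Theorem~1, and I would reuse his Lemma on composing approximations rather than redo it. The resulting exponent of $N$ per component is $\frac{\beta_i\alpha_i}{\alpha_{i_h}t_i}$, matching the statement's normalization.

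Next I set $N=J$ and let $\mathcal{G}_J$ be the sparse network class with $s\asymp J\log J$, $L\asymp\log J$ and widths $\lesssim J$. The entropy bound gives $\ln N(\delta,\mathcal{G}_J,\Vab{\cdot}_{L^\infty})\lesssim J\log J\,(\log(1/\delta)+\log^2 J)$, which is $cJ\,\mathrm{polylog}(J,1/\delta)$. The approximation bound gives $\sup_{g^*}\inf_{g\in\mathcal{G}_J}\Vab{g-g^*}_{L^\infty}\le c'J^{-\beta^\circ}$, taking the worst component. Both conditions of \zcref{def:beta-complexity} then hold. Because the output is vector-valued in $\mathcal{Z}_o$, the sup-norm uses the norm on $\mathcal{Z}_o$; stacking the $d_{i_h+1}$ coordinate networks in parallel changes only the constants.

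The main obstacle is the error-propagation step. The sup-norm approximation must track how Hölder-continuity of intermediate layers, with exponent $\min\{1,\beta_\ell\}$, degrades the rate. It must also ensure that intermediate network outputs stay inside the domains of the next components. Schmidt-Hieber handles this by rescaling to $[0,1]$ cubes and clipping with ReLU truncations, and I would adopt that device directly. The bookkeeping of exponents, making the normalization come out as $\alpha_i/\alpha_{i_h}$ rather than $\alpha_i$, is the delicate point I would check carefully.
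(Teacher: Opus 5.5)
Your route is the paper's own. The paper gives no separate argument: it applies Schmidt-Hieber's sparse-ReLU approximation theorem and his entropy bound to the sub-composition $\mathcal{F}_{i_h}\circ\cdots\circ\mathcal{F}_{i_e}$, treated as a composition in its own right, with network size of order $J$. Your parameter choices, the clipping device and the coordinate-wise stacking all fit that argument.

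There is, however, an error in exactly the bookkeeping step you flagged.

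\begin{itemize}
\item You set $\alpha_i'=\prod_{\ell=i+1}^{i_h}\max\{1,\beta_\ell\}$, copying the main text's definition of $\alpha_i$. The propagation mechanism you invoke instead produces $\prod_{\ell=i+1}^{i_h}\min\{1,\beta_\ell\}$, because a map in the $\beta_\ell$-H\"older ball is only $\min\{1,\beta_\ell\}$-H\"older.
\item The two products agree only when all these $\beta_\ell$ equal $1$. So your sentence asserting that the per-component exponent is $\frac{\beta_i\alpha_i}{\alpha_{i_h}t_i}$ does not follow.
\item With the $\max$ convention the claim is in fact false. Take $i_h=i_e+1$, $\beta_{i_e}=1/2$, $\beta_{i_h}=3/2$ and $t_{i_e}=t_{i_h}=1$; the $\max$ convention predicts exponent $3/4$.
\item A coordinate projection lies in the unit $H^{3/2}$ ball. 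Hence the class contains a copy of the unit $H^{1/2}$ ball on $[0,1]$, whose $L^\infty$ entropy is of order $\epsilon^{-2}$.
\item Approximating that copy to accuracy $c'J^{-3/4}$ by a class with log-covering number $cJ\,\mathrm{polylog}(J,1/\delta)$ would force $J^{3/2}\le CJ\,\mathrm{polylog}(J)$, which fails for large $J$.
\end{itemize}

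The fix is to use $\alpha_i=\prod_{\ell=i+1}^{H}\min\{1,\beta_\ell\}$ throughout. This is the definition in the appendix version of the cited theorem, and it matches Schmidt-Hieber's $\beta_i^*=\beta_i\prod_{\ell>i}\min\{\beta_\ell,1\}$.

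With this definition, $\alpha_i/\alpha_{i_h}=\prod_{\ell=i+1}^{i_h}\min\{1,\beta_\ell\}$ is precisely your propagated exponent. Also, $\alpha_{i_h}\in(0,1]$ is the H\"older exponent of the head $f_H\circ\cdots\circ f_{i_h+1}$, which is why the statement divides by it. In Theorem~\ref{thm:error-bound} one then gets $\alpha_{i_h}\cdot\min_i\frac{\alpha_i\beta_i}{\alpha_{i_h}t_i}=\min_i\frac{\alpha_i\beta_i}{t_i}$, which is what the compositional rate requires.
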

By applying \zcref{thm:error-bound} together with \zcref{thm:complexity-of-comp-holder-space-by-relu}, we obtain the following error bound for empirical caulking:
\begin{align}
  \mathbb{E}_{Q^n}[E_Q(f_n)] \le C \max_{i=i_e,...,i_h}n^{-\frac{2\alpha_i\beta_i}{2\alpha_i\beta_i+t_i}}\mathrm{polylog}(n), \label{eq:error-bound-with-pre-trained-model}
\end{align}
for some constant $C > 0$.
A comparison of \zcref{eq:error-bound-without-pre-trained-model} and \zcref{eq:error-bound-with-pre-trained-model} yields several important insights:
\begin{itemize}
  \item Both bounds take the maximum over $n^{-\frac{2\alpha_i\beta_i}{2\alpha_i\beta_i+t_i}}$, but with a pre-trained model, the range of the index $i$ is restricted to a narrower subset. This demonstrates the improved error rate that can be achieved by leveraging a pre-trained model through empirical caulking.
  \item The result in \zcref{eq:error-bound-with-pre-trained-model} can be interpreted as allowing the learner to bypass the complexities associated with learning the feature extractor part~($\mathcal{F}_{i_e-1},..., \mathcal{F}_1$) and the head part~($\mathcal{F}_{H},..., \mathcal{F}_{i_h+1}$) by utilizing the pre-trained model. Moreover, the greater the complexity of the pre-trained model~(increasing $i_e$ and decreasing $i_h$), the greater the improvement in the error rate. This partially supports the scaling law of the sample complexity of the downstream task observed in empirical studies.
  \item The influence of the feature extractor classes $\mathcal{F}_{i_e-1},..., \mathcal{F}_1$ is entirely eliminated from the error bound. In fact, the error rate in \zcref{eq:error-bound-with-pre-trained-model} remains unchanged even if these classes are replaced by arbitrary alternatives, as long as caulkability is maintained.
  \item The dependence on the head classes $\mathcal{F}_{H},..., \mathcal{F}_{i_h+1}$ persists, but only through the H\"older smoothness parameter $\alpha_i$.
\end{itemize}
More details for this example can be found in \zcref{sec:example-detailed}.

\section{Extension to Classification}\label{sec:classification}
In this section, we present the extension of our theoretical results to binary classification problems.

\paragraph{Setup.}
Let $X \in \mathcal{X}$ and $Y \in \Bab{0,1}$ denote the covariate and label, respectively.
In place of the regression function used in the regression setting, we consider the score function $f^*(x) = \mathbb{P}_Q\Bab{Y=1 | X=x}$.
Accordingly, we assume that the learner has access to a pre-trained model $f_{\mathrm{pre}}$ for the score function $f^*$, constructed from a source sample of size $m$ drawn from $P$. Given the pre-trained model $f_{\mathrm{pre}}$ and a target sample of size $n$ drawn from $Q$, the learner's objective is to construct a classifier $h:\mathcal{X}\to\Bab{0,1}$ that minimizes the classification error on the target distribution, i.e., $\mathbb{E}_Q[\mathds{1}\Bab{h(X) \ne Y}]$.
Let $h_n$ denote the resulting estimated classifier.

The accuracy of a classifier $h$ is evaluated by its excess error.
Let $h^*$ denote the Bayes optimal classifier that minimizes the classification error, which is given by $h^*(x) = \mathds{1}\Bab{f^*(x) > 1/2}$.
The excess error of a classifier $h$ is then defined as
\begin{align}
  E_Q(h) = \mathbb{E}_Q\ab[\mathds{1}\Bab{h(X) \ne Y}] - \mathbb{E}_Q\ab[\mathds{1}\Bab{h^*(X) \ne Y}].
\end{align}

\paragraph{Algorithm.}
We describe the plug-in estimator based on the estimation of the score function $f^*$.
The procedure first estimates the score function $f^*$, denoted by $f_n$, and then constructs the estimated classifier as $h_n(x) = \mathds{1}\Bab{f_n(x) > 1/2}$.
The classification error of the plug-in estimator can be controlled via the regression error bound.
Specifically, we have
\begin{align}
  \mathbb{E}_{Q^n}[E_Q(h_n)] \le 2\sqrt{\mathbb{E}_{Q^n}\ab[\Vab{f^* - f_n}_{L^2(Q_X)}^2]}.
\end{align}
Therefore, we employ empirical caulking with the squared error loss for estimating $f_n$.
Suppose that the ideal score function $f^*$ is $(\epsilon_n, \mathcal{G})$-caulkable by the pre-trained model $f_{\mathrm{pre}} = (g_{h}, g_{e})$.
Given a class $\mathcal{G}_n \subseteq \mathcal{G}$, and letting $\mathcal{F}_n = \Bab{g_h \circ g_a \circ g_e : g_a \in \mathcal{G}_n}$, the estimated score function is given by
\begin{align}
  f_n = \argmin_{f \in \mathcal{F}_n}\frac{1}{n}\sum_{i=1}^n\ab(f(X_i) - Y_i)^2.
\end{align}

\paragraph{Error analysis.}
We establish an analog of \zcref{thm:error-bound} for the classification setting.
\begin{theorem}\label{thm:cls-error-bound}
  Let $\alpha \in (0,1]$ and $\beta > 0$.
  Suppose that $f^*$ is $(n^{-\frac{2\alpha\beta}{2\alpha\beta+1}}, \mathcal{G})$-caulkable by $f_{\mathrm{pre}} = (g_{h}, g_{e})$ for some $\beta$-complex class $\mathcal{G}$, and $g_{h}$ is $\alpha$-H\"older continuous.
  Let $f_n$ denote the estimated score function obtained by empirical caulking with $\mathcal{G}_n = \mathcal{G}_J$, where $\mathcal{G}_J$ is as in \zcref{def:beta-complexity} and $J=\lceil n^{1/(2\alpha\beta+1)} \rceil$.
  Let $h_n(x) = \mathds{1}\Bab{f_n(x) > 1/2}$.
  Under \zcref{asm:regularity}, we have
  \begin{align}
    \mathbb{E}_{Q^n}[E_Q(h_n)] \le C n^{-\frac{\alpha\beta}{2\alpha\beta+1}}\mathrm{polylog}(n),
  \end{align}
  for some constant $C > 0$.
\end{theorem}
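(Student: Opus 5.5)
The plan is to reduce \zcref{thm:cls-error-bound} to the regression bound of \zcref{thm:error-bound}. The key tool is the plug-in inequality stated just before the theorem,
\begin{align}
  \mathbb{E}_{Q^n}[E_Q(h_n)] \le 2\sqrt{\mathbb{E}_{Q^n}\ab[\Vab{f^* - f_n}_{L^2(Q_X)}^2]}.
\end{align}
I will first verify this inequality. Conditionally on the sample, the excess error of a plug-in classifier $h_n = \mathds{1}\Bab{f_n > 1/2}$ equals
\begin{align}
  \mathbb{E}_{Q_X}\ab[|2f^*(X)-1|\,\mathds{1}\Bab{h_n(X)\ne h^*(X)}].
\end{align}
On the disagreement event, $|2f^*(X)-1| \le 2|f_n(X)-f^*(X)|$. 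So $E_Q(h_n) \le 2\Vab{f_n-f^*}_{L^1(Q_X)} \le 2\Vab{f_n-f^*}_{L^2(Q_X)}$. Taking expectations and applying Jensen's inequality to the concave square root gives the display.

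Next I will show that the classification model fits the regression setup of \zcref{thm:error-bound}. Write $Y = f^*(X) + \xi$ with $\xi = Y - f^*(X)$. Then $\mathbb{E}[\xi\mid X]=0$ and $|\xi|\le 1$, so $\xi$ is sub-Gaussian with variance proxy at most $1$ by Hoeffding's lemma. Also $\Omega=\Bab{0,1}$ is bounded with $\Delta_\Omega=1$, so \zcref{asm:regularity} holds.

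One subtlety is that $\xi$ is not independent of $X$; it is only conditionally mean-zero and conditionally sub-Gaussian. I therefore expect the main obstacle to be checking that the proof of \zcref{thm:error-bound} only uses $\mathbb{E}[\xi\mid X]=0$ together with conditional sub-Gaussianity. That is the case in standard least-squares oracle inequalities of the Schmidt-Hieber type. There the noise enters only through the empirical process $\frac{1}{n}\sum_i \xi_i (f(X_i)-f^*(X_i))$, which is handled conditionally on $X_1,\dots,X_n$ via a covering argument over $\mathcal{F}_n$. If the earlier proof genuinely relied on independence, I would rerun its chaining and covering step conditionally on the design. This works because the sub-Gaussian bound with proxy $1$ holds given each $X_i$, uniformly in $X_i$.

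Granting this, the caulkability, $\beta$-complexity and H\"older assumptions are identical to those of \zcref{thm:error-bound}, and so is the choice $J=\lceil n^{1/(2\alpha\beta+1)}\rceil$. Hence
\begin{align}
  \mathbb{E}_{Q^n}\ab[\Vab{f^*-f_n}_{L^2(Q_X)}^2] \le C n^{-\frac{2\alpha\beta}{2\alpha\beta+1}}\mathrm{polylog}(n).
\end{align}
Substituting this into the plug-in inequality gives
\begin{align}
  \mathbb{E}_{Q^n}[E_Q(h_n)] \le 2\sqrt{C}\, n^{-\frac{\alpha\beta}{2\alpha\beta+1}}\mathrm{polylog}(n),
\end{align}
since the square root of a polylogarithmic factor is still polylogarithmic. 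This is the claimed bound, with the constant renamed.
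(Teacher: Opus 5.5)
Your proposal is correct and follows essentially the same route as the paper, which obtains the result by combining the plug-in inequality $\mathbb{E}_{Q^n}[E_Q(h_n)] \le 2\sqrt{\mathbb{E}_{Q^n}[\|f^*-f_n\|_{L^2(Q_X)}^2]}$ from \zcref{sec:classification} with \zcref{thm:error-bound} and then taking the square root of the rate. You also derive the plug-in inequality and deal with the fact that $\xi = Y - f^*(X)$ is not independent of $X$, both of which the paper leaves implicit; your fix is sound, since the noise enters the paper's argument only through \zcref{lem:bound-emp-error}, and that lemma uses just $\mathbb{E}[\xi \mid X] = 0$ and sub-Gaussianity conditional on $X_1,\dots,X_n$.
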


\section{Proof Sketch}\label{sec:proof-sketch}
In this section, we provide an outline of the proof for the main result, \zcref{thm:error-bound}.
Complete proofs of \zcref{thm:error-bound} and other omitted results are given in the supplemental material.
The proof of \zcref{thm:error-bound} relies on the following error bound, which depends on both the approximation capability and the covering number of the class $\mathcal{F}_n$.
\begin{theorem}[\citet{schmidt-hieberNonparametricRegressionUsing2020,hayakawaMinimaxOptimalitySuperiority2020}]\label{thm:error-bound-general}
  Under the same assumptions as \zcref{thm:error-bound}, we have
  \begin{align}
    \mathbb{E}_{Q^n}\ab[\Vab{f_n - f^*}_{L^2(Q_X)}^2] \le C\pab{\inf_{f \in \mathcal{F}_n}\Vab{f - f^*}_{L^2(Q_X)}^2 + \frac{\ln\ab(N\ab(n^{-1}, \mathcal{F}_n, \Vab{\cdot}_{L^\infty(Q_X)}))}{n}},
  \end{align}
  for some constant $C > 0$ depending on $\sigma_\xi$ and $\Delta_\Omega$ in \zcref{asm:regularity}.
\end{theorem}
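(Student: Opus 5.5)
The plan is to prove \zcref{thm:error-bound-general} as the standard oracle inequality for least squares over a totally bounded class, following \citet{schmidt-hieberNonparametricRegressionUsing2020}. We first reduce to a finite class. Write $N = N(\delta, \mathcal{F}_n, \Vab{\cdot}_{L^\infty(Q_X)})$ with $\delta = n^{-1}$, and fix a $\delta$-net $\{f_1,\dots,f_N\}$. Every function in $\mathcal{F}_n$ takes values in $\Omega$; if needed we clip to $\Omega$, which only decreases errors because $f^*$ is $\Omega$-valued. Hence all functions are bounded by a constant depending on $\Delta_\Omega$.

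The first step is the basic inequality. Define the empirical norm $\Vab{f}_n^2 = \frac1n\sum_i f(X_i)^2$ and let $\bar f$ be any element of $\mathcal{F}_n$. Since $f_n$ minimizes the empirical squared loss,
\begin{align}
\Vab{f_n - f^*}_n^2 \le \Vab{\bar f - f^*}_n^2 + \frac{2}{n}\sum_{i=1}^n \xi_i\ab(f_n(X_i) - \bar f(X_i)).
\end{align}
Taking expectations, $\mathbb{E}\Vab{\bar f - f^*}_n^2 = \Vab{\bar f - f^*}_{L^2(Q_X)}^2$ because $\bar f$ is deterministic.

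The second step controls the noise term. Let $f_j$ be the net element closest to $f_n$ in sup-norm. Replacing $f_n$ by $f_j$ costs at most $2\delta\,\mathbb{E}\frac1n\sum_i|\xi_i| \lesssim \sigma/n$. For the remaining term, each $j$ gives $\frac{1}{\sqrt n}\sum_i \xi_i (f_j - f^*)(X_i)/\Vab{f_j-f^*}_n$, which conditionally on $X$ is sub-Gaussian with proxy $\sigma^2$. A maximal inequality over $N$ elements then bounds the expectation of the maximum squared by $\sigma^2(\ln N + 1)$. Cauchy--Schwarz together with $2ab \le a^2/2 + 2b^2$ gives
\begin{align}
\mathbb{E}\Vab{f_n - f^*}_n^2 \le C\ab(\Vab{\bar f - f^*}_{L^2(Q_X)}^2 + \frac{\ln N}{n} + \delta).
\end{align}
The bias term uses $\Vab{f_j-f^*}_n \le \Vab{f_n - f^*}_n + \delta$.

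The third step, which we expect to be the main obstacle, passes from the empirical norm to the population norm $L^2(Q_X)$; the difficulty is that $f_n$ is data-dependent. The approach is to take a ghost sample, or equivalently apply Bernstein's inequality uniformly over the $N$ net elements. For bounded $g_j = (f_j-f^*)^2 \le \Delta_\Omega^2$, we have $\mathrm{Var}(g_j) \le \Delta_\Omega^2\,\mathbb{E} g_j$. Bernstein plus a union bound then yields
\begin{align}
\mathbb{E}\Vab{f_j - f^*}_{L^2}^2 \le 2\,\mathbb{E}\Vab{f_j - f^*}_n^2 + C\Delta_\Omega^2\frac{\ln N}{n}
\end{align}
simultaneously for the selected index. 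This is the argument of \citet[Lemma 4]{schmidt-hieberNonparametricRegressionUsing2020}, and it is where care is needed with the self-bounding variance. Transferring back from $f_j$ to $f_n$ costs $O(\delta)$.

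Finally, combining the three steps, taking the infimum over $\bar f \in \mathcal{F}_n$, and noting $\delta = 1/n \le \ln N / n$ when $N \ge 3$ (otherwise absorbing it into the constant), we obtain the claimed bound. The constant $C$ depends only on $\sigma$ and $\Delta_\Omega$.
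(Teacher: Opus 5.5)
Your proposal is correct and follows essentially the same route as the paper. The paper proves the explicit version \zcref{thm:error-bound-general-A} and then sets $\delta = 1/n$; \zcref{lem:bound-emp-error} is your basic-inequality, net and sub-Gaussian maximal-inequality step, and \zcref{lem:bound-pop-error} is your Bernstein-over-the-net transfer to $L^2(Q_X)$. The only difference is that the paper handles the self-bounding variance in expectation: it normalizes by $\max\{\tau^2,\|g_K\|_{L^2(\nu_X)}^2\}$ and then uses Cauchy--Schwarz, a sub-gamma maximal inequality and a quadratic inequality, instead of your high-probability union bound and tail integration.

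Two caveats apply, and the paper's explicit version handles both by assumption. First, $\sup_{f\in\mathcal{F}_n}\|f\|_{L^\infty}\le F$ should simply be assumed, because clipping would change the least-squares estimator and so does not supply this bound. Second, the degenerate case $N=1$ must be excluded (the paper requires $\ln N_\delta \ge 1$), since there the $\delta$ term cannot be absorbed into the constant when the approximation error vanishes.
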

To utilize \zcref{thm:error-bound-general}, it is necessary to control both the approximation capability and the covering number of $\mathcal{F}_n$ in terms of the $\beta$-complexity of the class $\mathcal{G}_N$. For this purpose, we establish the following two propositions.
\begin{proposition}\label{prop:composition-cv-bound}
  Given a class of functions $\mathcal{G} = \Bab{g: \mathcal{Z}_i\to\mathcal{Z}_o}$, define $\mathcal{F} = \Bab*{g_h \circ g_a \circ g_e: g_a \in \mathcal{G}}$ for some functions $g_h:\mathcal{Z}_o\to\mathbb{R}$ and $g_e:\mathcal{X}\to\mathcal{Z}_i$. Assume that $g_h$ is $\alpha$-H\"older continuous with respect to a norm $\Vab{\cdot}$ on $\mathcal{Z}_o$ for some $\alpha \in (0,1]$. Then, for any measure $\nu$ on $\mathcal{X}$ and any $\delta > 0$, we have
  \begin{align}
    N(\delta, \mathcal{F}, \Vab{\cdot}_{L^\infty(\nu)}) \le N\ab(\ab(\frac{\delta}{C_\alpha}\ab)^{1/\alpha}, \mathcal{G}, \Vab{\cdot}_{L^\infty}),
  \end{align}
  where $C_\alpha$ is a constant for the H\"older continuity of $g_h$.
\end{proposition}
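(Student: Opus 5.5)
The plan is to push a cover of $\mathcal{G}$ forward through the map $g_a \mapsto g_h \circ g_a \circ g_e$. I will show this map is uniformly $\alpha$-H\"older from $(\mathcal{G}, \Vab{\cdot}_{L^\infty})$ into $(\mathcal{F}, \Vab{\cdot}_{L^\infty(\nu)})$. Then the image of a suitably fine cover of $\mathcal{G}$ is a $\delta$-cover of $\mathcal{F}$ with no more elements. If $N((\delta/C_\alpha)^{1/\alpha}, \mathcal{G}, \Vab{\cdot}_{L^\infty})$ is infinite there is nothing to prove, so I assume it is finite.

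Set $\eta = (\delta/C_\alpha)^{1/\alpha}$. Let $g_1,\dots,g_N$ be centers of an $\eta$-cover of $\mathcal{G}$ in $\Vab{\cdot}_{L^\infty}$, with $N = N(\eta, \mathcal{G}, \Vab{\cdot}_{L^\infty})$. Define $f_j = g_h \circ g_j \circ g_e$. Fix any $f = g_h \circ g_a \circ g_e \in \mathcal{F}$ and pick $j$ with $\Vab{g_a - g_j}_{L^\infty} \le \eta$. For every $x \in \mathcal{X}$, H\"older continuity of $g_h$ with constant $C_\alpha$ gives
\begin{align}
  |f(x) - f_j(x)| \le C_\alpha \Vab{g_a(g_e(x)) - g_j(g_e(x))}^\alpha \le C_\alpha \Vab{g_a - g_j}_{L^\infty}^\alpha \le C_\alpha \eta^\alpha = \delta.
\end{align}
The middle inequality uses that $\Vab{\cdot}_{L^\infty}$ on adapters is a supremum over all of $\mathcal{Z}_i$, and $g_e(x) \in \mathcal{Z}_i$. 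Since the bound holds pointwise for every $x$, it holds $\nu$-almost everywhere for any measure $\nu$, so $\Vab{f - f_j}_{L^\infty(\nu)} \le \delta$.

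Thus $\{f_j\}_{j=1}^N$ is a $\delta$-cover of $\mathcal{F}$, which gives the stated bound.

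The only delicate point is the convention for covers. If covering numbers require the centers to lie in the class, the $g_j$ can be taken in $\mathcal{G}$, so the $f_j$ lie in $\mathcal{F}$. If centers need not lie in the class, the $g_j$ may fall outside $\mathcal{G}$. The argument still works provided $g_h$ is H\"older on all of $\mathcal{Z}_o$ and the $g_j$ map $\mathcal{Z}_i$ into $\mathcal{Z}_o$. Under either convention the inequality holds exactly as stated, with no loss of constants. There is no real obstacle beyond this bookkeeping.
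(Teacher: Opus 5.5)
Your proposal is correct and follows essentially the same argument as the paper. Both use H\"older continuity of $g_h$, together with $g_e(\mathcal{X}) \subseteq \mathcal{Z}_i$, to get $\Vab{g_h\circ g_a\circ g_e - g_h\circ g'_a\circ g_e}_{L^\infty(\nu)} \le C_\alpha\Vab{g_a - g'_a}_{L^\infty}^\alpha$, and then push a $(\delta/C_\alpha)^{1/\alpha}$-net of $\mathcal{G}$ forward to a $\delta$-net of $\mathcal{F}$; your remarks on infinite covering numbers and on where the centers lie are bookkeeping the paper leaves implicit.
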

\begin{proposition}\label{prop:approximation-bound}
  Given a class of functions $\mathcal{G} = \Bab{g: \mathcal{Z}_i\to\mathcal{Z}_o}$, define $\mathcal{F} = \Bab*{g_h \circ g_a \circ g_e: g_a \in \mathcal{G}}$ for some functions $g_h:\mathcal{Z}_o\to\mathbb{R}$ and $g_e:\mathcal{X}\to\mathcal{Z}_i$. Assume that $g_h$ is $\alpha$-H\"older continuous with respect to a norm $\Vab{\cdot}$ on $\mathcal{Z}_o$ for some $\alpha \in (0,1]$. For any $f^* = g_h \circ g^*_a \circ g_e$ with $g^*_a$ possibly not in $\mathcal{G}$, we have
  \begin{align}
    \inf_{f \in \mathcal{F}}\Vab{f - f^*}_{L^2(Q_X)} \le C_\alpha\inf_{g_a \in \mathcal{G}}\Vab{g_a - g^*_a}_{L^\infty}^\alpha,
  \end{align}
  where $C_\alpha$ is a constant for the H\"older continuity of $g_h$.
\end{proposition}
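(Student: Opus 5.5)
The statement to prove is \zcref{prop:approximation-bound}. The plan is to reduce the $L^2(Q_X)$ approximation error of $\mathcal{F}$ to a pointwise bound, using the H\"older continuity of $g_h$. Because the head model $g_h$ and the feature extractor $g_e$ are shared between $f^*=g_h\circ g_a^*\circ g_e$ and every candidate $f = g_h\circ g_a\circ g_e\in\mathcal{F}$, only the adapter differs. So for each fixed $g_a\in\mathcal{G}$ I will bound $\Vab{f-f^*}_{L^2(Q_X)}$ by $C_\alpha\Vab{g_a-g_a^*}_{L^\infty}^\alpha$, and then take the infimum over $g_a$ on both sides.

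Concretely, fix $g_a\in\mathcal{G}$ and $x\in\mathcal{X}$, and set $z=g_e(x)\in\mathcal{Z}_i$. \zcref{def:holder-continuity}, applied to $g_h$ with norm $\Vab{\cdot}$ on $\mathcal{Z}_o$ and absolute value on $\mathbb{R}$, gives
\begin{align}
  |g_h(g_a(z)) - g_h(g_a^*(z))| \le C_\alpha \Vab{g_a(z)-g_a^*(z)}^\alpha \le C_\alpha \Vab{g_a - g_a^*}_{L^\infty}^\alpha.
\end{align}
The second inequality uses $\Vab{g_a-g_a^*}_{L^\infty}=\sup_{z\in\mathcal{Z}_i}\Vab{g_a(z)-g_a^*(z)}$ and the monotonicity of $t\mapsto t^\alpha$ on $[0,\infty)$. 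This is a uniform bound in $x$, so squaring and integrating against $Q_X$ yields
\begin{align}
  \Vab{f-f^*}_{L^2(Q_X)} \le \Vab{f-f^*}_{L^\infty(Q_X)} \le C_\alpha \Vab{g_a-g_a^*}_{L^\infty}^\alpha.
\end{align}
Here I use that $Q_X$ is a probability measure, so the $L^2$ norm is dominated by the sup norm.

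Finally, $f\in\mathcal{F}$, so the left side is at least $\inf_{f\in\mathcal{F}}\Vab{f-f^*}_{L^2(Q_X)}$. Taking the infimum over $g_a\in\mathcal{G}$ on the right, and using that $t\mapsto C_\alpha t^\alpha$ is continuous and nondecreasing (so $\inf_{g_a} C_\alpha t_{g_a}^\alpha = C_\alpha(\inf_{g_a} t_{g_a})^\alpha$), gives the claim. If $\inf_{g_a}\Vab{g_a-g_a^*}_{L^\infty}=\infty$ the statement is vacuous.

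I do not expect any real obstacle. The only points needing care are that $g_a^*$ need not lie in $\mathcal{G}$ (this does not matter, since the H\"older bound is applied pointwise) and that the $L^\infty$ norm of adapters is a supremum over all of $\mathcal{Z}_i$. That supremum dominates the values at points $g_e(x)$, so no measurability or support issue arises. If one instead wanted the essential supremum with respect to the pushforward $Q_X\circ g_e^{-1}$, the same argument goes through almost surely.
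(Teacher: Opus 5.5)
Your proposal is correct and follows essentially the same route as the paper: a pointwise H\"older bound $|g_h(g_a(g_e(x)))-g_h(g_a^*(g_e(x)))|\le C_\alpha\Vab{g_a-g_a^*}_{L^\infty}^\alpha$ is integrated against the probability measure $Q_X$, and then the infimum is taken over $g_a\in\mathcal{G}$. Your justification of that last step, via monotonicity and continuity of $t\mapsto t^\alpha$, fills in a detail the paper leaves implicit.
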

By applying \zcref{prop:composition-cv-bound} and \zcref{prop:approximation-bound} to the first and second terms of \zcref{thm:error-bound-general}, and using the $\beta$-complexity of the class $\mathcal{G}_J$, we obtain $\mathbb{E}_{Q^n}\ab[\Vab{f_n - f^*}_{L^2(Q_X)}^2] = O(J^{-2\alpha\beta} + J/n)$.
Choosing $J$ as in the statement yields the desired result.

\section{Empirical Evaluation of Caulking}\label{sec:experiments}

We conduct two types of experiments to support our theoretical claims.

\subsection{Fine-tuning of CNNs}

We first fine-tune ResNet-50 (26M parameters, \cite{he2016deep}) and Wide ResNet-50-2 (68M parameters, \cite{zagoruyko2016wide}) with adapters on the clipart domain of the Office-Home dataset~\citep{venkateswaraDeepHashingNetwork2017} after pre-training on other domains.
\zcref{fig:experiments_clipart} shows the relationship between the error rates and the depth of adapters, where the adapters consist of linear layers forming MLPs.
Due to the high similarity of the model architectures between ResNet and Wide ResNet, the results can suggest that the larger model achieves higher performance with a single-layer adapter, whereas the smaller model requires a more complex adapter.
These findings are consistent with our theoretical results, which suggest that a larger pre-trained model can be effectively adapted with a simpler adapter.

\begin{figure}[t]
  \centering
  \begin{minipage}[t]{.48\textwidth}
    \centering
    \includegraphics[width=\linewidth]{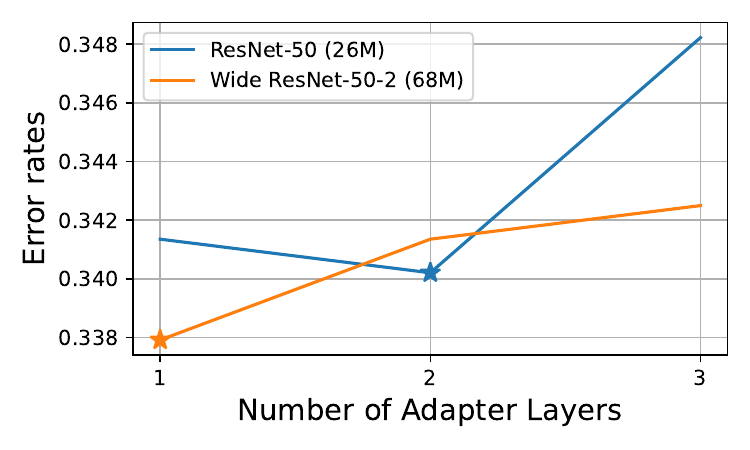}
    \caption{The relationship between the depth of adapters and the error rate on the target domain. Minimum error rates for each model are marked by $\star$.}
    \label{fig:experiments_clipart}
  \end{minipage}
  \hfill
  \begin{minipage}[t]{.48\textwidth}
    \centering
    \includegraphics[width=\linewidth]{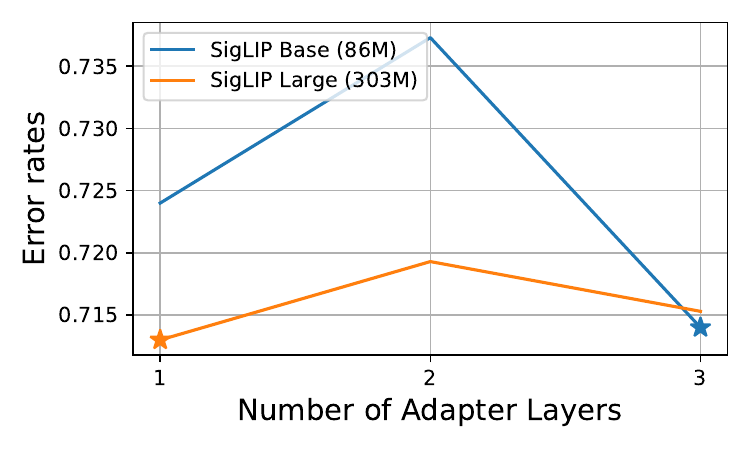}
    \caption{The relationship between the depth of adapters and the error rate on the MMStar dataset~\citep{chen2024are}.}
    \label{fig:experiments_vlm}
  \end{minipage}
\end{figure}

\subsection{Integrating Vision Capabilities to LLMs}

We then incorporate vision capabilities into a pre-trained language model by integrating pre-trained vision encoders and adapters.
Specifically, we use a Llama-3-style Transformer model (135M parameters, \cite{allal2024smollm}) together with SigLIP visual feature extractors of varying sizes (86M and 303M parameters, \cite{tschannen2025siglip2}) and train them as Vision Language Models following the training recipe of \cite{gosthipaty2024nanovlm} with a slight modification.
\zcref{fig:experiments_vlm} illustrates the relationship between the adapter depth and error rates.
Once again, the results are consistent with our theoretical findings.

Further experimental details can be  found in \zcref{app:sec:experimental_details}.

\section{Conclusion}
This paper presents a theoretical analysis of the advantages of utilizing pre-trained models for downstream tasks, with particular emphasis on the scaling laws associated with pre-trained models. These scaling laws indicate that larger pre-trained models can significantly reduce the sample complexity required for downstream tasks. We rigorously explain this phenomenon through the novel concept of {\em caulkability}. Our experimental results further corroborate our theoretical findings: in particular, larger pre-trained models can be effectively adapted to downstream tasks using simpler adapters. These insights clarify the benefits of employing larger pre-trained models for downstream applications.

\paragraph{Open question.}
The primary focus of this work is to identify the properties of pre-trained models that underlie the scaling laws observed in practice, and we show that caulkability by an adapter model class of decreasing complexity is a sufficient condition for these scaling laws. However, we do not propose a concrete learning algorithm for constructing a pre-trained model that exhibits the caulkability property. Designing such a learning algorithm remains an important and intriguing open question.

\subsubsection*{Acknowledgments}
This work was partly supported by JSPS KAKENHI Grant Numbers JP23K13011 to K.F., JP23K28146 and JP24K20836 to K.M, and JST BOOST Grant Number JPMJBY24G2 to R.H.

\bibliographystyle{apalike}
\bibliography{references}

\appendix

\section{Missing Proofs}\label{sec:missing-proofs}
\subsection{Proof of \zcref{thm:error-bound}}
As shown in \zcref{sec:proof-sketch}, the proof of \zcref{thm:error-bound} follows a similar approach to that of \citet{schmidt-hieberNonparametricRegressionUsing2020,hayakawaMinimaxOptimalitySuperiority2020}, utilizing \zcref{thm:error-bound-general}. For clarity, we present here a more rigorous statement of \zcref{thm:error-bound-general}:
\begin{theorem}\label{thm:error-bound-general-A}
  Let $\sigma > 0$ and $F > 0$ be constants. Let $X$, $Y$, and $\xi$ be random variables defined on a probability space $(\mathcal{Z}, \mathfrak{Z}, \nu)$, and let $\nu_X$ be the marginal distribution of $X$. Given a measurable function $f^*:\mathcal{X}\to\mathbb{R}$ such that $\Vab{f^*}_{L^\infty(\nu_X)} \le F$, suppose that these variables follow the regression model $Y = f^*(X) + \xi$. Suppose also that $\xi$ is independent of $X$ and sub-Gaussian with variance-proxy $\sigma^2$, i.e., $\mathbb{E}_{\nu}\ab[e^{\lambda \xi}] \le e^{\lambda^2\sigma^2/2}$ for all $\lambda \in \mathbb{R}$. Let $\mathcal{F}$ be a class of measurable functions $f:\mathcal{X}\to\mathbb{R}$ such that $\sup_{f \in \mathcal{F}} \Vab{f}_{L^\infty(\nu_X)} \le F$. Let $(X_1, Y_1), ..., (X_n, Y_n)$ be $n$ i.i.d. copies of $(X, Y)$, and define $f_n \in \mathcal{F}$ as a function that attains the following infimum:
  \begin{align}
    \inf_{f \in \mathcal{F}}\frac{1}{n}\sum_{i=1}^n\ab(f(X_i) - Y_i)^2.
  \end{align}
  Then, for any $\delta > 0$ satisfying $\ln(N(\delta, \mathcal{F}, \Vab{\cdot}_{L^\infty(\nu_X)})) \ge 1$, we have
  \begin{align}
    \mathbb{E}_{\nu^n}\ab[\Vab{f_n - f^*}_{L^2(\nu_X)}^2] \le C\ab(\inf_{f \in \mathcal{F}}\Vab{f - f^*}_{L^2(\nu_X)}^2 + \frac{\ab(F^2 + \sigma^2)\ln\ab(N\ab(\delta, \mathcal{F}, \Vab{\cdot}_{L^\infty(\nu)}))}{n} + (F+\sigma)\delta),
  \end{align}
  for some universal constant $C > 0$.
\end{theorem}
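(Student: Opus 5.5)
We prove the oracle inequality in \zcref{thm:error-bound-general-A} with the standard least-squares argument of \citet{schmidt-hieberNonparametricRegressionUsing2020}, Lemma 4. We relate the population risk to the empirical risk, then control the empirical risk through the basic inequality of the least squares estimator. Throughout, write $\hat R(f)=\frac1n\sum_i(f(X_i)-Y_i)^2$ and $\|f\|_n^2=\frac1n\sum_i f(X_i)^2$. Let $\{f_1,\dots,f_N\}$ be a $\delta$-cover of $\mathcal{F}$ in $\Vab{\cdot}_{L^\infty(\nu_X)}$, with $N=N(\delta,\mathcal{F},\Vab{\cdot}_{L^\infty(\nu_X)})$ and $\ln N\ge 1$. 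Since everything is measured through $\nu_X$, the sample points lie almost surely in the region where the $L^\infty(\nu_X)$ bounds apply.

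\textbf{Step 1 (empirical error bound).} Fix any $\bar f\in\mathcal{F}$. Because $\hat R(f_n)\le\hat R(\bar f)$ and $Y_i=f^*(X_i)+\xi_i$, we get
\begin{align}
\|f_n-f^*\|_n^2\le\|\bar f-f^*\|_n^2+\frac2n\sum_{i=1}^n\xi_i\bigl(f_n(X_i)-\bar f(X_i)\bigr).
\end{align}
Taking expectations gives $\mathbb{E}\|\bar f-f^*\|_n^2=\|\bar f-f^*\|_{L^2(\nu_X)}^2$. For the noise term, choose $j^*$ with $\|f_n-f_{j^*}\|_{L^\infty}\le\delta$. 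The mismatch is at most $\frac2n\sum_i|\xi_i|\delta\lesssim\sigma\delta$ in expectation. What remains is $\frac2n\sum_i\xi_i(f_{j^*}-f^*)(X_i)$, where $\bar f$ has been absorbed by comparing both sides with $f^*$. Normalize by $\max(\|f_{j}-f^*\|_n,\sqrt{\ln N/n})$. Conditionally on the $X_i$, each normalized sum is sub-Gaussian with proxy $O(\sigma^2)$, so a maximal inequality over $N$ elements gives expectation $O(\sigma\sqrt{\ln N})$. By Cauchy–Schwarz and $2ab\le a^2/2+2b^2$, the noise term is at most
\begin{align}
\tfrac12\,\mathbb{E}\|f_n-f^*\|_n^2+O\bigl(\sigma^2\ln N/n+(F+\sigma)\delta\bigr).
\end{align}
Absorbing the first term into the left-hand side yields
\begin{align}
\mathbb{E}\|f_n-f^*\|_n^2\lesssim\inf_{f}\|f-f^*\|_{L^2}^2+\sigma^2\ln N/n+(F+\sigma)\delta.
\end{align}

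\textbf{Step 2 (empirical to population).} Bound $\mathbb{E}\bigl[\|f_n-f^*\|_{L^2}^2-2\|f_n-f^*\|_n^2\bigr]$. Replace $f_n$ by $f_{j^*}$; this costs $O(F\delta)$ because the squared differences are bounded by $4F^2$ and the functions are $\delta$-close. Then apply Bernstein's inequality to each of the $N$ functions $g_j=(f_j-f^*)^2$. These satisfy $0\le g_j\le 4F^2$ and $\mathrm{Var}(g_j)\le 4F^2\,\mathbb{E}g_j$. The factor $2$ absorbs the variance term in the standard way: $P(\mathbb{E}g-\hat{\mathbb{E}}g\ge \tfrac12\mathbb{E}g+t)\le e^{-cnt/F^2}$. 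A union bound followed by integrating the tail gives $O(F^2\ln N/n)$. Combining the two steps proves the claim with a universal constant $C$.

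\textbf{Main obstacle.} The delicate step is the self-normalized maximal inequality in Step 1. The sub-Gaussian maximum must be taken over the cover, not over $\mathcal{F}$, and it must be weighted by $\|f_{j}-f^*\|_n$ so that it can be absorbed into the left-hand side. The condition $\ln N\ge1$ is what lets the resulting $\sqrt{\ln N}$ and additive constants be written as $\ln N$ without extra terms. I would follow the explicit computation in \citet{schmidt-hieberNonparametricRegressionUsing2020}, Lemma 4, with Gaussian noise replaced by sub-Gaussian noise. That replacement only changes the maximal inequality $\mathbb{E}\max_j Z_j\le\sigma\sqrt{2\ln N}$, which still holds under the sub-Gaussian moment generating function bound.
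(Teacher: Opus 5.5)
Your proof is correct in outline and uses the same two-part structure as the paper. Your Step 1 is essentially \zcref{lem:bound-emp-error}: the basic inequality, passing to the cover element at cost $O((F+\sigma)\delta)$, a maximum of self-normalized sub-Gaussian sums over the $N$ cover elements conditional on the covariates, Cauchy--Schwarz, and absorption. Your $2ab\le a^2/2+2b^2$ plays the role of the paper's \zcref{eq:quadratic-inequality} plus AM--GM.

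The genuine difference is Step 2. The paper (\zcref{lem:bound-pop-error}) works as follows. It applies Cauchy--Schwarz against the weight $\max\{\tau^2,\|g_K\|_{L^2(\nu_X)}^2\}$. It then controls the normalized deviations with a sub-gamma maximal inequality for squares (\zcref{lem:maximal-finite-sum-squares}). Finally it solves the resulting implicit inequality in $\mathbb{E}\|f_n-f^*\|_{L^2(\nu_X)}^2$ and tunes $\tau,\gamma,\zeta$, which yields a leading factor $1+\zeta$ on the empirical error.

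You instead use the self-bounding variance $\mathrm{Var}((f_j-f^*)^2)\le 4F^2\,\mathbb{E}(f_j-f^*)^2$ in a one-sided Bernstein bound at level $\tfrac12\mathbb{E}g_j+t$, followed by a union bound and tail integration. This is shorter and needs neither the auxiliary maximal lemma nor the quadratic-inequality step. It gives a leading factor $2$, which is irrelevant for a universal $C$; taking level $\tfrac{\zeta}{1+\zeta}\mathbb{E}g_j+t$ would recover $1+\zeta$ if wanted.

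Two small repairs are needed in Step 1.

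First, the floor $\sqrt{\ln N/n}$ in your normalization is dimensionless, while $\|f_j-f^*\|_n$ scales like $F$. After $2ab\le a^2/2+2b^2$ it leaves an additive $\tfrac12\ln N/n$. That term is not bounded by $C(F^2+\sigma^2)\ln N/n$ with universal $C$ when $F$ and $\sigma$ are small. Schmidt-Hieber can afford such a floor because there $F\ge 1$ and the noise has unit variance. Either normalize by $\|f_j-f^*\|_n$ alone (as the paper does, with $0/0:=0$), or use the floor $\sigma\sqrt{\ln N/n}$.

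Second, Cauchy--Schwarz requires a bound on the second moment, $\mathbb{E}\max_j Z_j^2\le C\sigma^2\ln N$, for your normalized sums. The first-moment bound $\mathbb{E}\max_j Z_j\le\sigma\sqrt{2\ln N}$ that you cite is not enough. The second-moment bound follows by bounding $\mathbb{E}e^{\lambda Z_j^2}$ with $\lambda$ of order $1/\sigma^2$ and applying Jensen. This is exactly how the paper obtains $\mathbb{E}\max_k Z_k^2\le 4\sigma^2\ln(3N_\delta)/n$.
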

The proof of \zcref{thm:error-bound-general-A} is given in \zcref{sec:proof-of-thm:error-bound-general-A}. \zcref{thm:error-bound-general-A} characterizes the error bound of the least-squares estimator $f_n$ over the class $\mathcal{F}$ in terms of the approximation error of $f^*$ by $\mathcal{F}$ and the covering number of $\mathcal{F}$. To control these two terms, we utilize \zcref{prop:composition-cv-bound} and \zcref{prop:approximation-bound}. Recall the statements of \zcref{prop:composition-cv-bound} and \zcref{prop:approximation-bound}:
\begin{proposition}\label{prop:composition-cv-bound-A}
  Given a class of functions $\mathcal{G} = \Bab{g: \mathcal{Z}_i\to\mathcal{Z}_o}$, define $\mathcal{F} = \Bab*{g_h \circ g_a \circ g_e: g_a \in \mathcal{G}}$ for some functions $g_h:\mathcal{Z}_o\to\mathbb{R}$ and $g_e:\mathcal{X}\to\mathcal{Z}_i$. Assume that $g_h$ is $\alpha$-H\"older continuous with respect to a norm $\Vab{\cdot}$ on $\mathcal{Z}_o$ for some $\alpha \in (0,1]$. Then, for any measure $\nu$ on $\mathcal{X}$ and any $\delta > 0$, we have
  \begin{align}
    N(\delta, \mathcal{F}, \Vab{\cdot}_{L^\infty(\nu)}) \le N\ab(\ab(\frac{\delta}{C_\alpha}\ab)^{1/\alpha}, \mathcal{G}, \Vab{\cdot}_{L^\infty}),
  \end{align}
  where $C_\alpha$ is a constant for the H\"older continuity of $g_h$.
\end{proposition}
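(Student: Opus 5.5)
The plan is to push a $\Vab{\cdot}_{L^\infty}$-cover of $\mathcal{G}$ forward through the map $g_a \mapsto g_h \circ g_a \circ g_e$ and check that the image is a $\delta$-cover of $\mathcal{F}$ in $\Vab{\cdot}_{L^\infty(\nu)}$. Set $\eta = (\delta/C_\alpha)^{1/\alpha}$ and let $N = N(\eta, \mathcal{G}, \Vab{\cdot}_{L^\infty})$. If $N=\infty$ the inequality is trivial, so assume $N<\infty$. Choose centers $g_1,\dots,g_N$ such that every $g_a\in\mathcal{G}$ satisfies $\sup_{z\in\mathcal{Z}_i}\Vab{g_a(z)-g_j(z)}\le \eta$ for some $j$. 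Define the candidate centers $f_j = g_h\circ g_j\circ g_e$, $j=1,\dots,N$.

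Fix $f = g_h\circ g_a\circ g_e\in\mathcal{F}$ and the corresponding $j$. For every $x\in\mathcal{X}$, $\alpha$-H\"older continuity of $g_h$ with constant $C_\alpha$ gives
\begin{align}
  |f(x) - f_j(x)| = |g_h(g_a(g_e(x))) - g_h(g_j(g_e(x)))| \le C_\alpha \Vab{g_a(g_e(x)) - g_j(g_e(x))}^\alpha \le C_\alpha \eta^\alpha = \delta.
\end{align}
The inner estimate uses only $g_e(x)\in\mathcal{Z}_i$ and the uniform bound over $\mathcal{Z}_i$. The bound therefore holds pointwise for every $x$, and hence $\nu$-almost everywhere for any measure $\nu$. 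It follows that $\Vab{f-f_j}_{L^\infty(\nu)}\le\delta$, so $\{f_j\}$ is a $\delta$-cover of $\mathcal{F}$ of cardinality $N$, which gives the claimed inequality.

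The only delicate point is bookkeeping about the covering convention. If covers may use centers outside the class, the argument above applies verbatim. If centers must lie in the class, take $g_j\in\mathcal{G}$, so that $f_j\in\mathcal{F}$ automatically. Either way the same convention is used on both sides, so the bound holds under either convention. It also does not matter whether the radius is closed or open: with an open radius, the strict inequality $<\eta$ passes through the monotone map $t\mapsto C_\alpha t^\alpha$ and yields $<\delta$.

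Note that no measurability or regularity of $g_e$ is required, since the bound is uniform in $x$. This is why the right-hand side involves the sup-norm on $\mathcal{Z}_i$ rather than an $L^\infty$ norm with respect to the pushforward of $\nu$ under $g_e$.
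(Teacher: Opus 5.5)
Your proposal is correct and follows the paper's route: the H\"older continuity of $g_h$ gives $\Vab{f - f'}_{L^\infty(\nu)} \le C_\alpha \Vab{g_a - g'_a}_{L^\infty}^\alpha$, so a $(\delta/C_\alpha)^{1/\alpha}$-net of $\mathcal{G}$ maps to a $\delta$-net of $\mathcal{F}$. Your pointwise formulation is slightly cleaner than the paper's, which writes the $L^\infty(\nu)$ norm as equal to a supremum over $g_e(\mathcal{X})$ when only $\le$ holds and is needed.
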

\begin{proposition}\label{prop:approximation-bound-A}
  Given a class of functions $\mathcal{G} = \Bab{g: \mathcal{Z}_i\to\mathcal{Z}_o}$, define $\mathcal{F} = \Bab*{g_h \circ g_a \circ g_e: g_a \in \mathcal{G}}$ for some functions $g_h:\mathcal{Z}_o\to\mathbb{R}$ and $g_e:\mathcal{X}\to\mathcal{Z}_i$. Assume that $g_h$ is $\alpha$-H\"older continuous with respect to a norm $\Vab{\cdot}$ on $\mathcal{Z}_o$ for some $\alpha \in (0,1]$. For any $f^* = g_h \circ g^*_a \circ g_e$ with $g^*_a$ possibly not in $\mathcal{G}$, we have
  \begin{align}
    \inf_{f \in \mathcal{F}}\Vab{f - f^*}_{L^2(Q_X)} \le C_\alpha\inf_{g_a \in \mathcal{G}}\Vab{g_a - g^*_a}_{L^\infty}^\alpha,
  \end{align}
  where $C_\alpha$ is a constant for the H\"older continuity of $g_h$.
\end{proposition}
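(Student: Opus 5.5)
The approach is pointwise: for any $f = g_h\circ g_a\circ g_e \in \mathcal{F}$ and $x \in \mathcal{X}$, the H\"older continuity of $g_h$ gives
\begin{align}
  |f(x) - f^*(x)| = |g_h(g_a(g_e(x))) - g_h(g^*_a(g_e(x)))| \le C_\alpha \Vab{g_a(g_e(x)) - g^*_a(g_e(x))}^\alpha .
\end{align}
Since $g_e(x) \in \mathcal{Z}_i$, the right-hand side is bounded by $C_\alpha \sup_{z\in\mathcal{Z}_i}\Vab{g_a(z)-g_a^*(z)}^\alpha = C_\alpha\Vab{g_a-g^*_a}_{L^\infty}^\alpha$. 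Here I use that $t\mapsto t^\alpha$ is nondecreasing on $[0,\infty)$, so the supremum commutes with the power.

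Next, I pass from the uniform bound to the $L^2(Q_X)$-norm. Because $Q_X$ is a probability measure, $\Vab{f-f^*}_{L^2(Q_X)} \le \sup_x |f(x)-f^*(x)| \le C_\alpha\Vab{g_a-g^*_a}_{L^\infty}^\alpha$. This step implicitly requires measurability of $f-f^*$, which I take as given as in the rest of the paper.

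Finally, I take the infimum. Each $g_a\in\mathcal{G}$ yields some $f\in\mathcal{F}$, so
\begin{align}
  \inf_{f\in\mathcal{F}}\Vab{f-f^*}_{L^2(Q_X)} \le \inf_{g_a\in\mathcal{G}} C_\alpha\Vab{g_a-g^*_a}_{L^\infty}^\alpha .
\end{align}
Again by monotonicity and continuity of $t\mapsto t^\alpha$, the infimum passes inside the power, which gives the claim.

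There is no real obstacle. The only care needed is bookkeeping: the $L^\infty$ norm of the adapter is a supremum over all of $\mathcal{Z}_i$, not just over the range of $g_e$, and this makes the bound only easier.
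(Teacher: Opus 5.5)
Your proposal is correct and follows essentially the same route as the paper: a pointwise H\"older bound $|f(x)-f^*(x)| \le C_\alpha \|g_a(g_e(x)) - g_a^*(g_e(x))\|^\alpha \le C_\alpha \|g_a - g_a^*\|_{L^\infty}^\alpha$, then integration against the probability measure $Q_X$, then the infimum over $g_a \in \mathcal{G}$. The paper integrates the squared pointwise bound and takes a square root, which amounts to your $L^2(Q_X) \le \sup$ step.
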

The proofs of \zcref{prop:composition-cv-bound-A} and \zcref{prop:approximation-bound-A} are given in \zcref{sec:proof-of-prop:composition-cv-bound-A} and \zcref{sec:proof-of-prop:approximation-bound-A}, respectively. Building upon \zcref{thm:error-bound-general-A}, \zcref{prop:composition-cv-bound-A}, and \zcref{prop:approximation-bound-A}, we can prove \zcref{thm:error-bound}:
\begin{proof}[Proof of \zcref{thm:error-bound}]
  We apply \zcref{thm:error-bound-general-A} with $\mathcal{F} = \mathcal{F}_n$, where $\mathcal{F}_n = \Bab{g_h \circ g_a \circ g_e : g_a \in \mathcal{G}_n}$ for a sequence of classes $\mathcal{G}_n$ in \zcref{def:beta-complexity} corresponding to a $\beta$-complex class $\mathcal{G}$. With $\delta = 1/n$, we have
  \begin{multline}
    \mathbb{E}_{\nu^n}\ab[\Vab{f_n - f^*}_{L^2(\nu_X)}^2] \le \\ C\ab(\inf_{f \in \mathcal{F}_n}\Vab{f - f^*}_{L^2(\nu_X)}^2 + \frac{\ab(F^2 + \sigma^2)\ln\ab(N\ab(n^{-1}, \mathcal{F}_n, \Vab{\cdot}_{L^\infty(\nu)}))}{n} + \frac{(F+\sigma)}{n}).
  \end{multline}
  Since $\mathcal{G}_n$ is a sequence of classes in \zcref{def:beta-complexity} corresponding to a $\beta$-complex class $\mathcal{G}$, by \zcref{prop:composition-cv-bound-A,prop:approximation-bound-A}, we have
  \begin{align}
    N(n^{-1}, \mathcal{F}_n, \Vab{\cdot}_{L^\infty(\nu)}) \le N\ab(\ab(\frac{n^{-1}}{C_\alpha}\ab)^{1/\alpha}, \mathcal{G}_n, \Vab{\cdot}_{L^\infty}) \le cJ\mathrm{polylog}(J, (C_\alpha n)^{-1/\alpha}),
  \end{align}
  and
  \begin{align}
    \inf_{f \in \mathcal{F}_n}\Vab{f - f^*}_{L^2(\nu_X)}^2 \le C_\alpha\inf_{g_a \in \mathcal{G}_n}\Vab{g_a - g^*_a}_{L^\infty}^\alpha \le c'J^{-2\alpha\beta},
  \end{align}
  where $J$ is defined in the statement. Hence, we have
  \begin{align}
    \mathbb{E}_{\nu^n}\ab[\Vab{f_n - f^*}_{L^2(\nu_X)}^2] \le C\ab(J^{-2\alpha\beta} + \frac{J\mathrm{polylog}(J, (C_\alpha n)^{-1/\alpha})}{n} + \frac{1}{n}). \label{eq:error-bound-J}
  \end{align}
  \zcref{eq:error-bound-J} is minimized with $J$ defined in the statement, which yields the desired claim.
\end{proof}

\subsection{Proof of \zcref{thm:error-bound-general-A}}\label{sec:proof-of-thm:error-bound-general-A}
We begin by introducing notation that will be used throughout the proof for clarity and conciseness. Let $\xi_1, ..., \xi_n$ be i.i.d. copies of $\xi$. For any function $f:\mathcal{X}\to\mathbb{R}$, define the empirical $L^2$ norm of $f$ and noises $\xi_i$ as
\begin{align}
  \Vab{f}^2_n := \frac{1}{n} \sum_{i=1}^n f^2(X_i), \quad \Vab{\xi}^2_n := \frac{1}{n} \sum_{i=1}^n \xi_i^2,
\end{align}
and the empirical inner product with the noise variables as
\begin{align}
  \ab<f,\xi>_n := \frac{1}{n} \sum_{i=1}^n \xi_i f(X_i).
\end{align}
For the notational convenience, we use $\mathbb{V}_\nu[X]$ to denote the variance of a random variable $X$ defined on a probability space $(\mathcal{Z}, \mathfrak{Z}, \nu)$.

The proof consists of two steps. In the first step, we bound the expected error of $f_n$ by its empirical error. In the second step, we give a bound on the empirical error of $f_n$. Specifically, we show the following two lemmas.
\begin{lemma}\label{lem:bound-pop-error}
  Under the same assumptions as in \zcref{thm:error-bound-general-A}, we have
  \begin{multline}
    \mathbb{E}_{\nu^n}\ab[\Vab{f_n - f^*}^2_{L^2(\nu_X)}] \le \\ \inf_{\zeta > 0}\pab[Bigg]{\ab(1+\zeta)\mathbb{E}_{\nu^n}\ab[\Vab{f_n - f^*}_n^2] \\ + \ab(2 + 3\zeta + \frac{1}{2\zeta})\frac{512F^2\ln(N_\delta)}{n} + \ab(1 + \zeta + \frac{1}{4\zeta})\frac{9F^2}{n} + 4(2+3\zeta)F\delta}.
  \end{multline}
  where $N_\delta = N(\delta, \mathcal{F}, \Vab{\cdot}_{L^\infty(\nu_X)})$.
\end{lemma}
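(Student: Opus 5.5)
We bound the population error of the data-dependent estimator $f_n$ by its empirical error through a ghost-sample argument over a finite $\delta$-cover, in the style of Lemma 4 of \citet{schmidt-hieberNonparametricRegressionUsing2020}.

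\textbf{Reduction to the cover.} Let $X'_1,\dots,X'_n$ be an independent ghost sample from $\nu_X$. Since $f_n$ depends only on the original sample,
\begin{align}
\mathbb{E}_{\nu^n}\bigl[\Vab{f_n-f^*}^2_{L^2(\nu_X)}\bigr] = \mathbb{E}\Bigl[\frac1n\sum_{i=1}^n (f_n-f^*)^2(X'_i)\Bigr].
\end{align}
It therefore suffices to control
\begin{align}
D := \frac1n\sum_{i=1}^n\bigl(g_n(X'_i)-g_n(X_i)\bigr), \qquad g_n := (f_n-f^*)^2 .
\end{align}
Let $f_1,\dots,f_{N_\delta}$ be a $\delta$-cover of $\mathcal{F}$ in $L^\infty(\nu_X)$, and let $j^*$ be a (measurable) random index with $\Vab{f_n-f_{j^*}}_{L^\infty}\le\delta$. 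Because $|f-f^*|\le 2F$, we have $|(f_n-f^*)^2-(f_{j^*}-f^*)^2|\le 4F\delta$. Replacing $f_n$ by $f_{j^*}$ in $D$ thus costs at most $2\cdot 4F\delta$, which is the source of the $F\delta$ terms.

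\textbf{Concentration on the cover.} Put $g_j=(f_j-f^*)^2$ and $r_j^2 = \Vab{f_j-f^*}^2_{L^2(\nu_X)}$, and normalize each difference $\sum_i(g_j(X'_i)-g_j(X_i))$ by $r_j\vee a$, where $a$ is a level to be optimized. The summands are symmetric, bounded by $4F^2$, and have variance at most $2\cdot 4F^2 r_j^2$, since $g_j\le 4F^2$ implies $\mathbb{E}[g_j^2]\le 4F^2\mathbb{E}[g_j]$. Bernstein's inequality combined with a union bound over the $N_\delta$ cover elements gives
\begin{align}
\mathbb{E}\Bigl[\max_j \frac{|\sum_i (g_j(X'_i)-g_j(X_i))|}{r_j\vee a}\Bigr] \lesssim F\sqrt{n\ln N_\delta} + \frac{F^2\ln N_\delta}{a}.
\end{align}
Here the hypothesis $\ln N_\delta\ge1$ absorbs the additive constants. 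Next, $r_{j^*}\le \bigl(\Vab{f_n-f^*}^2_{L^2}\bigr)^{1/2}+2\sqrt{F\delta}$, and we choose $a\asymp F\sqrt{\ln N_\delta/n}$. Using Cauchy--Schwarz on the product of the maximum with $r_{j^*}\vee a$, we arrive at
\begin{align}
R \le \mathbb{E}\bigl[\Vab{f_n-f^*}_n^2\bigr] + cF\sqrt{\tfrac{\ln N_\delta}{n}}\,\sqrt{R} + c'\frac{F^2\ln N_\delta}{n} + c''F\delta,
\end{align}
where $R$ denotes the target population risk.

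\textbf{Conclusion and main obstacle.} Apply $2xy\le \zeta' x^2 + y^2/\zeta'$ to the cross term, rearrange so that $(1-\zeta')R$ appears on the left, and reparametrize to obtain $(1+\zeta)\,\mathbb{E}\Vab{f_n-f^*}_n^2$. This produces the coefficients $(2+3\zeta+\frac{1}{2\zeta})$ and $(1+\zeta+\frac1{4\zeta})$ in front of the $F^2\ln N_\delta/n$ and $F^2/n$ terms. The $9F^2/n$ term arises from the lower-order Bernstein constants and from the choice of $a$. Finally, take the infimum over $\zeta$.

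The main obstacle is the bookkeeping in the second step. The estimator, and hence the index $j^*$, is random, so we must bound $\mathbb{E}[\max_j(\cdot)\,(r_{j^*}\vee a)]$ carefully via Cauchy--Schwarz, using $\mathbb{E}[\max_j Z_j^2]$ rather than the first moment. Matching the stated constants such as $512$ requires doing this with explicit Bernstein tail integration. I would first try the simpler route of integrating the tail $\mathbb{P}(\max_j Z_j^2>t)\le 2N_\delta e^{-t/(cF^2)}$ directly.
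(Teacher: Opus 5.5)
Your proposal is correct and follows essentially the paper's proof: replace $f_n$ by its nearest cover element at cost $8F\delta$, normalize by $\max\{\tau,\|f_k-f^*\|_{L^2(\nu_X)}\}$, bound the second moment of the maximum over the cover via Bernstein and a union bound, apply Cauchy--Schwarz against the random index, and close with a quadratic inequality and AM--GM; the one structural difference is that the paper centers $(f_k-f^*)^2(X_i)$ directly at its mean $\|f_k-f^*\|_{L^2(\nu_X)}^2$ instead of using a ghost sample, which is slightly more direct but changes nothing essential. Two bookkeeping caveats: first, your shortcut tail $2N_\delta e^{-t/(cF^2)}$ does not hold for all $t$, because the normalized summands are only bounded by $4F^2/a$ and Bernstein's bound has a sub-exponential regime (hence the paper's sub-gamma maximal inequality, \zcref{lem:maximal-finite-sum-squares}, which still gives the same order); second, the displayed constants do not literally come out of your rearrangement, which puts $(1+\zeta)^2/(4\zeta)$ in front of the squared cross-term factor where the paper, solving $x^2\le p+q\sqrt{x^2+s^2}$ before applying AM--GM, gets $1+\zeta+\frac{1}{4\zeta}$, and $512=16\cdot 32$ reflects the paper's variance proxy $32F^2$ rather than your $8F^2$ --- though if you keep $\sqrt{R+4F\delta+a^2}$ intact, your route yields coefficients no larger than the stated ones, and in any case only universal constants matter once $\zeta$ is fixed in \zcref{thm:error-bound-general-A}.
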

\begin{lemma}\label{lem:bound-emp-error}
  Under the same assumptions as in \zcref{thm:error-bound-general-A}, we have
  \begin{multline}
    \mathbb{E}_{\nu^n}\ab[\Vab{f_n - f^*}_n^2] \le \inf_{\zeta > 0}\pab[Bigg]{\ab(1+\zeta)\inf_{f \in \mathcal{F}}\Vab{f - f^*}_{L^2(\nu_X)}^2 \\ + \ab(1 + \frac{\zeta}{2} + \frac{1}{2\zeta})\frac{8\sigma^2\ln\ab(3N_\delta)}{n} + \ab((1+\zeta)\sqrt{2\pi}\sigma + 4F\zeta)\delta}.
  \end{multline}
  where $N_\delta = N(\delta, \mathcal{F}, \Vab{\cdot}_{L^\infty(\nu_X)})$.
\end{lemma}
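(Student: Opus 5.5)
The statement to prove is \zcref{lem:bound-emp-error}, the bound on the expected empirical error. The plan is the standard basic-inequality argument for least squares, with the noise cross term handled by a covering argument.

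\textbf{Basic inequality.} Fix any $f\in\mathcal{F}$. Since $f_n$ minimizes the empirical squared loss and $Y_i=f^*(X_i)+\xi_i$, expanding the squares gives
\begin{align}
\Vab{f_n-f^*}_n^2 \le \Vab{f-f^*}_n^2 + 2\ab<f_n-f,\xi>_n .
\end{align}
Taking expectations, $\mathbb{E}\Vab{f-f^*}_n^2=\Vab{f-f^*}_{L^2(\nu_X)}^2$. Also $\mathbb{E}\ab<f-f^*,\xi>_n=0$ by independence and zero mean of $\xi$. So it suffices to bound $2\mathbb{E}\ab<f_n-f^*,\xi>_n$ by roughly $\frac{1}{1+\zeta}$ times $\mathbb{E}\Vab{f_n-f^*}_n^2$ plus remainder terms. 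Afterwards I rearrange and take the infimum over $f$. (Writing the cross term as $\ab<f_n-f^*,\xi>_n-\ab<f-f^*,\xi>_n$ is what makes the zero-mean term appear.)

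\textbf{Discretization.} Take a $\delta$-cover $f_1,\dots,f_{N_\delta}$ of $\mathcal{F}$ in $\Vab{\cdot}_{L^\infty(\nu_X)}$, and let $j^*$ be the (random) index with $\Vab{f_n-f_{j^*}}_{L^\infty}\le\delta$. Then
\begin{align}
|\ab<f_n-f_{j^*},\xi>_n| \le \delta\,\frac1n\sum_i|\xi_i|,
\end{align}
whose expectation is at most $\delta\,\mathbb{E}|\xi|\le\delta\sqrt{2\pi}\sigma$ by sub-Gaussianity. This explains the $\sqrt{2\pi}\sigma\delta$ term. On the other side, $\Vab{f_{j^*}-f^*}_n\le\Vab{f_n-f^*}_n+\delta$. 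Expanding its square and using $\Vab{f_n-f^*}_n\le 2F$ produces the $4F\zeta\delta$-type term.

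\textbf{Maximal inequality on the cover.} For each $j$, define the normalized variable
\begin{align}
\epsilon_j=\frac{\sum_i\xi_i(f_j-f^*)(X_i)}{\sqrt n\,\Vab{f_j-f^*}_n}.
\end{align}
Conditionally on the $X_i$, each $\epsilon_j$ is sub-Gaussian with variance proxy $\sigma^2$. The standard bound on the maximum of finitely many sub-Gaussians, applied to the squared maximum (the extra count in $\ln(3N_\delta)$ absorbs constants), gives
\begin{align}
\mathbb{E}\max_j\epsilon_j^2\le 8\sigma^2\ln(3N_\delta).
\end{align}
Then
\begin{align}
\ab<f_{j^*}-f^*,\xi>_n\le \frac{\Vab{f_{j^*}-f^*}_n}{\sqrt n}\max_j|\epsilon_j|,
\end{align}
and Cauchy--Schwarz in expectation followed by AM--GM with parameter $\zeta$ gives
\begin{align}
2\,\mathbb{E}[\cdot]\le \frac{\zeta}{1+\zeta}\mathbb{E}\Vab{f_{j^*}-f^*}_n^2+c_\zeta\frac{8\sigma^2\ln(3N_\delta)}{n}.
\end{align}
Substituting the perturbation bound $\Vab{f_{j^*}-f^*}_n\le\Vab{f_n-f^*}_n+\delta$ and moving the $\mathbb{E}\Vab{f_n-f^*}_n^2$ term to the left yields the stated form.

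\textbf{Main obstacle.} The delicate step is tracking the constants $(1+\zeta)$, $(1+\zeta/2+1/(2\zeta))$ and $(1+\zeta)\sqrt{2\pi}\sigma+4F\zeta$ through the AM--GM rearrangement. A related subtlety is that $j^*$ depends on the noise, which is why the maximum over all $j$ is needed rather than a fixed-index bound. I will first try a quadratic inequality of the form $a\le b+2\sqrt{a}\,c$ to get the coefficients right. If the exact constants don't come out, any bound of the same shape suffices for \zcref{thm:error-bound-general-A}.
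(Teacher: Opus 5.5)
Your plan is essentially the paper's proof: the same basic inequality, the $\delta$-cover with the $\delta\,\mathbb{E}|\xi|$ discretization term, normalized conditionally sub-Gaussian variables with a $\ln(3N_\delta)$ maximal bound, Cauchy--Schwarz combined with $\Vab{f_{j^*}-f^*}_n^2\le\Vab{f_n-f^*}_n^2+4F\delta$, and a final rearrangement. The paper does this last step by solving the quadratic inequality $x^2\le a+b\sqrt{x^2+c^2}$ and then applying AM--GM; your absorption with weight $\zeta/(1+\zeta)$ gives exactly the same constants.

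To land on the stated constants you need three adjustments.
\begin{itemize}
\item Use the sharper bound $\mathbb{E}\max_j\epsilon_j^2\le 4\sigma^2\ln(3N_\delta)$ rather than $8\sigma^2\ln(3N_\delta)$. It follows from $\mathbb{E}[e^{\epsilon_j^2/(4\sigma^2)}\mid X_1,\dots,X_n]\le 3$ together with Jensen.
\item Bound $\Vab{f_{j^*}-f^*}_n^2-\Vab{f_n-f^*}_n^2$ by $4F\delta$ directly, through the product $(f_{j^*}-f_n)(f_{j^*}+f_n-2f^*)$ with cover centers taken in $\mathcal{F}$. Squaring the triangle inequality instead leaves an extra $\delta^2$.
\item The cross term carries a factor $2$, so the discretization term is $2\delta\,\mathbb{E}|\xi|$, not $\delta\,\mathbb{E}|\xi|$. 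It is still at most $\sqrt{2\pi}\sigma\delta$, since $\mathbb{E}|\xi|\le(\mathbb{E}\xi^2)^{1/2}\le\sigma$ gives $2\delta\,\mathbb{E}|\xi|\le 2\sigma\delta\le\sqrt{2\pi}\sigma\delta$.
\end{itemize}
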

Combining \zcref{lem:bound-pop-error} and \zcref{lem:bound-emp-error} immediately yields \zcref{thm:error-bound-general-A}.
\begin{proof}[Proof of \zcref{thm:error-bound-general-A}]
  Combining \zcref{lem:bound-pop-error} and \zcref{lem:bound-emp-error} and taking $\zeta$ in these lemmas as universal constants, we obtain
  \begin{align}
    \mathbb{E}_{\nu^n}\ab[\Vab{f_n - f^*}^2_{L^2(\nu_X)}] \le C\ab(\inf_{f \in \mathcal{F}}\Vab{f - f^*}_{L^2(\nu_X)}^2 + \frac{\ab(\sigma^2 + F^2)\ln(N_\delta)}{n} + \ab(\sigma + F)\delta),
  \end{align}
  for some universal constant $C > 0$.
\end{proof}

In the following subsections, we present the proofs of \zcref{lem:bound-pop-error} and \zcref{lem:bound-emp-error}. The bounds in both lemmas are determined by the covering number of the class $\mathcal{F}$. For notational convenience, let $f_k$ denote the centers of balls in a $\delta$-covering of $\mathcal{F}_n$ with respect to the norm $\Vab{\cdot}_{L^\infty(\nu_X)}$. Define $g_k = f_k - f^*$, and let $K = \argmin_{k} \Vab{f_n - f_k}_{L^\infty(\nu_X)}$.

\subsubsection{Proof of \zcref{lem:bound-pop-error}}
\begin{proof}[Proof of \zcref{lem:bound-pop-error}]
  Let $\tau > 0$ be a constant to be specified later. We have
  \begin{align}
    & \mathbb{E}_{\nu^n}\ab[\Vab{f_n - f^*}^2_{L^2(\nu_X)}] \\
    =& \mathbb{E}_{\nu^n}\ab[\Vab{f_n - f^*}_n^2] + \mathbb{E}_{\nu^n}\ab[\Vab{f_n - f^*}^2_{L^2(\nu_X)} - \Vab{f_n - f^*}_n^2] \\
    \le& \mathbb{E}_{\nu^n}\ab[\Vab{f_n - f^*}_n^2] + 8F\delta + \mathbb{E}_{\nu^n}\ab[\Vab{g_K}^2_{L^2(\nu_X)} - \Vab{g_K}_n^2] \\
    \le& \mathbb{E}_{\nu^n}\ab[\Vab{f_n - f^*}_n^2] + 8F\delta + \mathbb{E}_{\nu^n}\ab[\max\Bab{\tau^2,\Vab{g_K}^2_{L^2(\nu_X)}}]^{1/2}\mathbb{E}_{\nu^n}\ab[\frac{\ab(\Vab{g_K}^2_{L^2(\nu_X)} - \Vab{g_K}_n^2)^2}{\max\Bab{\tau^2,\Vab{g_K}^2_{L^2(\nu_X)}}}]^{1/2}, \label{eq:bound-pop-error-1}
  \end{align}
  where the third inequality follows from the Cauchy-Schwarz inequality, and in the second inequality we use the following:
  \begin{align}
    & \Vab{f_n - f^*}_{L^2(\nu_X)}^2 - \Vab{f_n - f^*}_n^2 \\
    =&
    \begin{multlined}[t][.95\textwidth]
      \Vab{g_K}_{L^2(\nu_X)}^2 + \mathbb{E}_\nu\ab[\ab(f_K(X) - f_n(X))\ab(f_K(X) + f_n(X) - 2f^*(X))] \\ - \Vab{g_K}_n^2 - \frac{1}{n}\sum_{i=1}^n\ab(f_K(X_i) - f_n(X_i))\ab(f_K(X_i) + f_n(X_i) - 2f^*(X_i))
    \end{multlined} \\
    \le& \Vab{g_K}^2 - \Vab{g_K}_n^2 + 8F\delta,
  \end{align}
  almost surely.

  For the last term in \zcref{eq:bound-pop-error-1}, we have
  \begin{align}
    &\mathbb{E}_{\nu^n}\ab[\max\Bab{\tau^2,\Vab{g_K}^2_{L^2(\nu_X)}}] \\
    \le& \mathbb{E}_{\nu^n}\ab[\Vab{f_n - f^*}^2_{L^2(\nu_X)}] + \mathbb{E}_{\nu^n}\ab[\Vab{g_K}^2_{L^2(\nu_X)} - \Vab{f_n - f^*}^2_{L^2(\nu_X)}] + \tau^2 \\
    =& \mathbb{E}_{\nu^n}\ab[\Vab{f_n - f^*}^2_{L^2(\nu_X)}] + \mathbb{E}_{\nu^n}\ab[\mathbb{E}_{\nu}\ab[\ab(f_K(X) - f_n(X))\ab(f_K + f_n - 2f^*)]] + \tau^2 \\
    \le& \mathbb{E}_{\nu^n}\ab[\Vab{f_n - f^*}^2_{L^2(\nu_X)}] + 4F\delta + \tau^2. \label{eq:bound-pop-error-2}
  \end{align}
  Furthermore,
  \begin{align}
    \mathbb{E}_{\nu^n}\ab[\frac{\ab(\Vab{g_K}^2_{L^2(\nu_X)} - \Vab{g_K}_n^2)^2}{\max\Bab{\tau^2,\Vab{g_K}^2_{L^2(\nu_X)}}}]
    = \mathbb{E}_{\nu^n}\ab[\ab(\frac{1}{n}\sum_{i=1}^n\frac{\ab(\Vab{g_K}_{L^2(\nu_X)}^2 - g^2_K(X_i))}{\max\Bab{\tau^2,\Vab{g_K}_{L^2(\nu_X)}}})^2].
  \end{align}
  Define $Z_{k,i} = \frac{(g^2_k(X_i) - \Vab{g_k}_{L^2(\nu_X)}^2)}{\max\Bab{\tau^2,\Vab{g_k}_{L^2(\nu_X)}}}$. Then $Z_{k,i}$ is zero-mean and
  \begin{align}
    \mathbb{E}_\nu\ab[Z_{k,i}^2] =& \mathbb{E}_\nu\ab[\frac{(g_k(X_i) - g_k(X'_i))^2(g_k(X_i) + g_k(X'_i))^2}{\tau^2\lor\Vab{g_k}_{L^2(\nu_X)}^2}] \\
    \le& 8\Vab{g_k}_{L^\infty(\nu_X)}^2\frac{\mathbb{V}_\nu\ab[g_k(X_i)]}{\max\Bab{\tau^2,\Vab{g_k}_{L^2(\nu_X)}^2}} \le 8\Vab{g_k}_{L^\infty(\nu_X)}^2 \le 32F^2,
  \end{align}
  where $X'_i$ are independent copies of $X_i$, and $\ab|Z_{k,i}| \le \Vab{g_k}_{L^\infty(\nu_X)}^2/\tau \le 4F^2/\tau$ almost surely. By the Bernstein condition for bounded random variables, for all $k$ and $\lambda \in (0, 3\tau/4F^2)$,
  \begin{align}
    \mathbb{E}_{\nu^n}\ab[\exp\ab(\lambda \sum_{i=1}^n Z_{k,i})] \le \exp\ab(\frac{32n\lambda^2F^2}{2(1 - 4\lambda F^2/3\tau)}).
  \end{align}
  Applying \zcref{lem:maximal-finite-sum-squares}, for any $\gamma > 0$,
  \begin{align}
    \mathbb{E}_{\nu^n}\ab[\max_k \ab(\sum_{i=1}^n Z_{k,i})^2] \le 512\ab(nF^2\ln(N_\delta/\gamma) + \gamma\max\Bab{nF^2,\frac{F^4}{9\tau^2}\ln(N_\delta/\gamma)} + \frac{\gamma F^4}{9\tau^2}).
  \end{align}
  Therefore,
  \begin{align}
    \mathbb{E}_{\nu^n}\ab[\frac{\ab(\Vab{g_K}^2_{L^2(\nu_X)} - \Vab{g_K}_n^2)^2}{\max\Bab{\tau^2,\Vab{g_K}^2_{L^2(\nu_X)}}}]
    \le& \frac{1}{n^2}\mathbb{E}_{\nu^n}\ab[\max_k \ab(\sum_{i=1}^n Z_{k,i})^2] \\
    \le& \frac{512F^2}{n}\ab(\ln(N_\delta/\gamma) + \gamma\max\Bab{1,\frac{F^2}{9\tau^2n}\ln(N_\delta/\gamma)} + \frac{\gamma F^2}{9\tau^2n}). \label{eq:bound-pop-error-3}
  \end{align}
  Substituting \zcref{eq:bound-pop-error-2} and \zcref{eq:bound-pop-error-3} into \zcref{eq:bound-pop-error-1}, we obtain
  \begin{multline}
    \mathbb{E}_{\nu^n}\ab[\Vab{f_n - f^*}^2_{L^2(\nu_X)}] \le \mathbb{E}_{\nu^n}\ab[\Vab{f_n - f^*}_n^2] + 8F\delta \\ + \ab(\mathbb{E}_{\nu^n}\ab[\Vab{f_n - f^*}^2_{L^2(\nu_X)}] + 4F\delta + \tau^2)^{1/2}\\\cdot\ab(\frac{512F^2}{n}\ab(\ln(N_\delta/\gamma) + \gamma\max\Bab{1,\frac{F^2}{9\tau^2n}\ln(N_\delta/\gamma)} + \frac{\gamma F^2}{9\tau^2n}))^{1/2}. \label{eq:bound-pop-error-4}
  \end{multline}

  Now, for $a,b,c > 0$, if $x^2 \le a + b\sqrt{x^2 + c^2}$, then $x^2 \le a$ or
  \begin{align}
    x^4 - 2x^2\ab(a + \frac{b^2}{2}) + \ab(a^2 - b^2c^2) \le 0,
  \end{align}
  which implies that
  \begin{align}
    x^2 \le a + \frac{b^2}{2} + b\sqrt{a + \frac{b^2}{4} + c^2}. \label{eq:quadratic-inequality}
  \end{align}

  Therefore, from \zcref{eq:bound-pop-error-4}, we have
  \begin{multline}
    \mathbb{E}_{\nu^n}\ab[\Vab{f_n - f^*}^2_{L^2(\nu_X)}] \le \mathbb{E}_{\nu^n}\ab[\Vab{f_n - f^*}_n^2] + 8F\delta \\ + \frac{512F^2}{n}\ab(\ln(N_\delta/\gamma) + \gamma\max\Bab{1,\frac{F^2}{9\tau^2n}\ln(N_\delta/\gamma)} + \frac{\gamma F^2}{9\tau^2n}) \\ + \ab(\frac{512F^2}{n}\ab(\ln(N_\delta/\gamma) + \gamma\max\Bab{1,\frac{F^2}{9\tau^2n}\ln(N_\delta/\gamma)} + \frac{\gamma F^2}{9\tau^2n}))^{1/2}\\\cdot\ab(\mathbb{E}_{\nu^n}\ab[\Vab{f_n - f^*}_n^2] + \tau^2 + \frac{512F^2}{n}\ab(\ln(N_\delta/\gamma) + \gamma\max\Bab{1,\frac{F^2}{9\tau^2n}\ln(N_\delta/\gamma)} + \frac{\gamma F^2}{9\tau^2n}) + 12F\delta)^{1/2}.
  \end{multline}
  By the AM-GM inequality, for any $\zeta > 0$, we have
  \begin{multline}
    \mathbb{E}_{\nu^n}\ab[\Vab{f_n - f^*}^2_{L^2(\nu_X)}] \le \ab(1+\zeta)\mathbb{E}_{\nu^n}\ab[\Vab{f_n - f^*}_n^2] + 4(2+3\zeta)F\delta \\ + \ab(1 + \zeta + \frac{1}{4\zeta})\frac{512F^2}{n}\ab(\ln(N_\delta/\gamma) + \gamma\max\Bab{1,\frac{F^2}{9\tau^2n}\ln(N_\delta/\gamma)} + \frac{\gamma F^2}{9\tau^2n}) + \zeta\tau^2.
  \end{multline}

  Now set
  \begin{align}
    \tau^2 = \frac{512F^2\ln(N_\delta/\gamma)}{n}.
  \end{align}
  Then,
  \begin{multline}
    \mathbb{E}_{\nu^n}\ab[\Vab{f_n - f^*}^2_{L^2(\nu_X)}] \le \ab(1+\zeta)\mathbb{E}_{\nu^n}\ab[\Vab{f_n - f^*}_n^2] + 4(2+3\zeta)F\delta \\ + \ab(1 + \zeta + \frac{1}{4\zeta})\frac{512F^2}{n}\ab(\ln(N_\delta/\gamma) + \gamma\ab(1 + \frac{9}{512\ln(N_\delta/\gamma)})) + \frac{512\zeta F^2\ln(N_\delta/\gamma)}{n}.
  \end{multline}
  Simplifying, we obtain
  \begin{multline}
    \mathbb{E}_{\nu^n}\ab[\Vab{f_n - f^*}^2_{L^2(\nu_X)}] \le \ab(1+\zeta)\mathbb{E}_{\nu^n}\ab[\Vab{f_n - f^*}_n^2] + 4(2+3\zeta)F\delta \\ + \ab(1 + 2\zeta + \frac{1}{4\zeta})\frac{512F^2\ln(N_\delta/\gamma)}{n} + \ab(1 + \zeta + \frac{1}{4\zeta})\frac{512F^2\gamma}{n}\ab(1 + \frac{9}{512\ln(N_\delta/\gamma)}).
  \end{multline}
  Setting $\gamma = \ln(N_\delta)$, we have
  \begin{multline}
    \mathbb{E}_{\nu^n}\ab[\Vab{f_n - f^*}^2_{L^2(\nu_X)}] \le \ab(1+\zeta)\mathbb{E}_{\nu^n}\ab[\Vab{f_n - f^*}_n^2] + 4(2+3\zeta)F\delta \\ + \ab(2 + 3\zeta + \frac{1}{2\zeta})\frac{512F^2\ln(N_\delta)}{n} + \ab(1 + \zeta + \frac{1}{4\zeta})\frac{9F^2}{n}.
  \end{multline}
  The arbitrariness of $f \in \mathcal{F}$ and $\zeta$ gives the desired result.
\end{proof}

\subsubsection{Proof of \zcref{lem:bound-emp-error}}
\begin{proof}[Proof of \zcref{lem:bound-emp-error}]
  Starting from the regression model $Y = f^*(X) + \xi$, we obtain
  \begin{align}
    \frac{1}{n}\sum_{i=1}^n\ab(f_n(X_i) - Y_i)^2 = \Vab{f_n - f^*}^2_{n} - 2\ab<f_n - f^*,\xi>_n + \Vab{\xi}^2_n.
  \end{align}
  Since $\Vab{\xi}^2_n$ is independent of $f_n$, for any $f \in \mathcal{F}_n$ that is independent of the sample, it follows that
  \begin{align}
    \Vab{f_n - f^*}_n^2 - 2\ab<f_n - f^*,\xi>_n \le \Vab{f - f^*}^2_{n} - 2\ab<f - f^*,\xi>_n.
  \end{align}
  Therefore,
  \begin{align}
    \mathbb{E}_{\nu^n}\ab[\Vab{f_n - f^*}_n^2] =& \mathbb{E}_{\nu^n}\ab[\Vab{f_n - f^*}_n^2 - 2\ab<f_n - f^*,\xi>_n + 2\ab<f_n - f^*,\xi>_n] \\
    \le& \Vab{f - f^*}_{L^2(\nu_X)}^2 + 2\mathbb{E}_{\nu^n}\ab[\ab<f_n - f_K,\xi>_n] + 2\mathbb{E}_{\nu^n}\ab[\ab<g_K,\xi>_n] \\
    \le& \Vab{f - f^*}_{L^2(\nu_X)}^2 + 2\delta\mathbb{E}_{\nu}\ab[\ab|\xi|] + 2\mathbb{E}_{\nu^n}\ab[\Vab{g_K}_n^2]^{1/2}\mathbb{E}_{\nu^n}\ab[\ab|\ab<\frac{g_K}{\Vab{g_K}_n},\xi>_n|^2]^{1/2}. \label{eq:bound-emp-error-1}
  \end{align}

  For the second term in \zcref{eq:bound-emp-error-1}, the subgaussianity of $\xi$ together with Markov's inequality yields
  \begin{align}
    \mathbb{E}_{\nu}\ab[\ab|\xi|] =& \int_0^\infty \ab(\mathbb{P}_{\nu}\Bab{\xi > t} + \mathbb{P}_{\nu}\Bab{-\xi > t}) dt \\
    \le& 2\int_0^\infty \inf_{\lambda > 0}\exp\ab(\frac{\lambda^2\sigma^2}{2} - \lambda t) dt \\
    =& 2\int_0^\infty \exp\ab(-\frac{t^2}{2\sigma^2}) dt \\
    =& \sqrt{2\pi}\sigma. \label{eq:bound-emp-error-2}
  \end{align}

  For the last term in \zcref{eq:bound-emp-error-1}, by the triangle inequality, we have
  \begin{align}
    \Vab{g_K}_n^2 =& \Vab{f_n - f^*}_n^2 + \Vab{g_K}_n^2 - \Vab{f_n - f^*}_n^2 \\
    \le& \Vab{f_n - f_K}_n^2 + \frac{1}{n}\sum_{i=1}^n \ab(f_K(X_i) - f_n(X_i))\ab(f_K(X_i) + f_n(X_i) - 2f^*(X_i)) \\
    \le& \Vab{f_n - f^*}_n^2 + 4F\delta, \label{eq:bound-emp-error-3}
  \end{align}
  almost surely.

  Conditioned on $X_1, ..., X_n$, $Z_k = \ab<\frac{g_k}{\Vab{g_k}_n},\xi>_n$ is zero-mean and sub-Gaussian with variance proxy $\sigma^2/n$. Thus,
  \begin{align}
    \mathbb{E}_{\nu^n}\ab[\exp\ab(\lambda Z_k^2)] =& 1 + \int_1^\infty \mathbb{P}_{\nu^n}\Bab{e^{\lambda Z_k^2} > t}dt \\
    =& 1 + \int_1^\infty \inf_{\kappa > 0}\mathbb{P}_{\nu^n}\Bab{e^{\kappa\sqrt{\lambda}\ab|Z_k|} > e^{\kappa\ln^{1/2}(t)}}dt \\
    \le& 1 + 2\int_1^\infty \inf_{\kappa > 0}\exp\ab(\frac{\lambda \sigma^2\kappa^2}{2n} - \kappa\ln^{1/2}(t))dt \\
    =& 1 + 2\int_1^\infty \exp\ab(-\frac{n\ln(t)}{2\lambda\sigma^2}) dt \\
    =& 1 + 2\int_1^\infty t^{-\frac{n}{2\lambda\sigma^2}} dt \\
    =& 1 + \frac{2}{\frac{n}{2\lambda\sigma^2} - 1},
  \end{align}
  provided that $2\lambda\sigma^2 < n$. Therefore,
  \begin{align}
    \mathbb{E}_{\nu^n}\ab[\exp\ab(\lambda\max_kZ_k^2)] \le& \sum_k\mathbb{E}_{\nu^n}\ab[\exp\ab(\lambda Z_k^2)] \\
    \le& N_\delta \ab(1 + \frac{2}{\frac{n}{2\lambda\sigma^2} - 1}).
  \end{align}
  By Jensen's inequality, it follows that
  \begin{align}
    \mathbb{E}_{\nu^n}\ab[\lambda\max_kZ_k^2] \le \ln\ab(\mathbb{E}_{\nu^n}\ab[\exp\ab(\lambda\max_kZ_k^2)]) \le \ln(N_\delta) + \ln\ab(1 + \frac{2}{\frac{n}{2\lambda\sigma^2} - 1}).
  \end{align}
  Setting $\lambda = \frac{n}{4\sigma^2}$, we obtain
  \begin{align}
    \mathbb{E}_{\nu^n}\ab[\max_kZ_k^2] \le \frac{4\sigma^2}{n}\ln\ab(3N_\delta). \label{eq:bound-emp-error-4}
  \end{align}

  Substituting \zcref{eq:bound-emp-error-2}, \zcref{eq:bound-emp-error-3}, and \zcref{eq:bound-emp-error-4} into \zcref{eq:bound-emp-error-1}, we have
  \begin{align}
    \mathbb{E}_{\nu^n}\ab[\Vab{f_n - f^*}_n^2] \le \Vab{f - f^*}_{L^2(Q_X)}^2 + \sqrt{2\pi}\sigma\delta + 4\sqrt{\ab(\mathbb{E}_{\nu^n}\ab[\Vab{f_n - f^*}_n^2] + 4F\delta)\frac{\sigma^2}{n}\ln\ab(3N_\delta)}. \label{eq:bound-emp-error-5}
  \end{align}
  Applying \zcref{eq:quadratic-inequality} to \zcref{eq:bound-emp-error-5} yields
  \begin{multline}
    \mathbb{E}_{\nu^n}\ab[\Vab{f_n - f^*}_n^2] \le \Vab{f - f^*}_{L^2(\nu_X)}^2 + \sqrt{2\pi}\sigma\delta \\ + \frac{8\sigma^2\ln\ab(3N_\delta)}{n} + \ab(\frac{16\sigma^2\ln\ab(3N_\delta)}{n})^{1/2}\ab(\Vab{f - f^*}_{L^2(\nu_X)}^2+ \frac{4\sigma^2\ln\ab(3N_\delta)}{n} + \ab(\sqrt{2\pi}\sigma + 4F)\delta )^{1/2}.
  \end{multline}
  By the AM-GM inequality, for any $\zeta > 0$, we have
  \begin{align}
    \mathbb{E}_{\nu^n}\ab[\Vab{f_n - f^*}_n^2] \le (1+\zeta)\Vab{f - f^*}_{L^2(\nu_X)}^2 + \ab((1+\zeta)\sqrt{2\pi}\sigma + 4F\zeta)\delta + \ab(1 + \frac{\zeta}{2} + \frac{1}{2\zeta})\frac{8\sigma^2\ln\ab(3N_\delta)}{n}.
  \end{align}
  The arbitrariness of $\zeta$ gives the desired result.
\end{proof}

\subsection{Proof of \zcref{prop:composition-cv-bound-A}}\label{sec:proof-of-prop:composition-cv-bound-A}

\begin{proof}[Proof of \zcref{prop:composition-cv-bound-A}]
  Let $f = g_h \circ g_a \circ g_e \in \mathcal{F}$ and $f' = g_h \circ g'_a \circ g_e \in \mathcal{F}$. Then, we have
  \begin{align}
    \Vab{f - f'}_{L^\infty(\nu)} =& \Vab{g_h \circ g_a \circ g_e - g_h \circ g'_a \circ g_e}_{L^\infty(\nu)} \\
    =& \sup_{z \in g_e \circ \mathcal{X}} \ab|g_h(g_a(z)) - g_h(g'_a(z))| \\
    \le& C_\alpha\sup_{z \in \mathcal{Z}_i} \Vab{g_a(z) - g'_a(z)}^\alpha = C_\alpha\Vab{g_a - g'_a}_{L^\infty}^\alpha,
  \end{align}
  where $g_e \circ \mathcal{X} = \Bab{g_e(x): x \in \mathcal{X}} \subseteq \mathcal{Z}_i$. Let $g_1,...,g_N$ be a $\ab(\frac{\delta}{C_\alpha})^{1/\alpha}$-net for $\mathcal{G}$ with respect to $\Vab{\cdot}_{L^\infty}$. Then, for any $f = g_h \circ g'_a \circ g_e \in \mathcal{F}$, we have
  \begin{align}
    \min_{i \in \Bab{1,...,N}} \Vab{f - g_h \circ g_i \circ g_e}_{L^\infty(\nu)} \le C_\alpha\min_{i \in \Bab{1,...,N}} \Vab{g_a - g_i}_{L^\infty}^\alpha \le \delta.
  \end{align}
  Hence, $\Bab{g_h \circ g_i \circ g_e: i \in \Bab{1,...,N}}$ is a $\delta$-net for $\mathcal{F}$ with respect to $\Vab{\cdot}_{L^\infty(\nu)}$. Therefore, $N$ is greater than or equal to the covering number of $\mathcal{F}$ for the $\Vab{\cdot}_{L^\infty(\nu)}$-norm.
\end{proof}

\subsection{Proof of \zcref{prop:approximation-bound-A}}\label{sec:proof-of-prop:approximation-bound-A}

\begin{proof}[Proof of \zcref{prop:approximation-bound-A}]
  Let $f = g_h \circ g_a \circ g_e \in \mathcal{F}$ and $f^* = g_h \circ g^*_a \circ g_e$. Then, we have
  \begin{align}
    \Vab{f - f^*}_{L^2(Q_X)}^2 =& \int \ab((g_h \circ g_a \circ g_e)(x) - (g_h \circ g^*_a \circ g_e)(x))^2 Q_X(dx) \\
    \le& C_\alpha^2\int\ab(g_a(g_e(x)) - g^*_a(g_e(x)))^{2\alpha} Q_X(dx) \\
    \le& C_\alpha^2\int\Vab{g_a - g^*_a}_{L^\infty}^{2\alpha}Q_X(dx) = C_\alpha^2\Vab{g_a - g^*_a}_{L^\infty}^{2\alpha}.
  \end{align}
  Taking the square root of both sides, we get the desired claim.
\end{proof}

\subsection{Auxiliary Lemmas}
\subsubsection{Maximal Inequality for Finite Sum of Squares}
We present here a maximal inequality that is useful for controlling the supremum of a finite collection of random variables, which arises in the analysis of covering numbers and empirical processes. The following lemma provides an upper bound on the expected maximum of the squared values of a finite sequence of independent random variables, under suitable moment generating function conditions.
\begin{lemma}\label{lem:maximal-finite-sum-squares}
  Let $Z_1,...,Z_N$ be independent random variables on a probability space $(\mathcal{Z}, \mathfrak{Z}, \nu)$ such that $Z_i$ are sub-Gamma with variance proxy $\sigma^2$ and scaling factor $F > 0$, i.e., for all $\lambda \in (0, 1/F)$, $\mathbb{E}_\nu[e^{\lambda \ab|Z_k|}] \le \exp\ab(\frac{\lambda^2\sigma^2}{2(1-F\lambda)})$. Then,
  \begin{align}
    \mathbb{E}_{\nu}\ab[\max_{i \in \Bab{1,...,N}} Z_i^2] \le 16\inf_{\gamma > 0}\ab(\sigma^2\ln(N/\gamma) + \gamma\max\Bab{\sigma^2,2F^2\ln(N/\gamma)} + 2\gamma F^2).
  \end{align}
\end{lemma}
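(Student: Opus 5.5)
The plan is tail integration combined with a union bound. Start from the moment generating function hypothesis and apply Markov's inequality: for each $i$, $t>0$ and $\lambda\in(0,1/F)$,
\begin{align}
  \mathbb{P}_\nu\Bab{\ab|Z_i|>t}\le \exp\ab(\frac{\lambda^2\sigma^2}{2(1-F\lambda)}-\lambda t).
\end{align}
Choosing $\lambda = t/(\sigma^2+Ft)$ gives the Bernstein tail $\exp(-t^2/(2(\sigma^2+Ft)))$. Splitting at $t=\sigma^2/F$ replaces this by two simpler tails. For $t\le\sigma^2/F$ it is at most $\exp(-t^2/(4\sigma^2))$, and for $t\ge\sigma^2/F$ it is at most $\exp(-t/(4F))$. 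A union bound over the $N$ variables then gives, for every $u>0$,
\begin{align}
  \mathbb{P}_\nu\Bab{\max_i Z_i^2>u}\le \min\Bab{1,\,N\ab(e^{-u/(4\sigma^2)}+e^{-\sqrt{u}/(4F)})}.
\end{align}
Independence is not needed; the union bound suffices.

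Next, fix $\gamma>0$ and a threshold $u_0$. Write $\mathbb{E}_\nu[\max_i Z_i^2]=\int_0^\infty\mathbb{P}_\nu\Bab{\max_i Z_i^2>u}\,du$. On $[0,u_0]$ bound the probability by $1$, which contributes $u_0$. On $[u_0,\infty)$ integrate the two exponential tails separately.
\begin{itemize}
  \item The Gaussian tail integrates to $4\sigma^2Ne^{-u_0/(4\sigma^2)}$. This is at most $4\sigma^2\gamma\le 16\gamma\sigma^2$ once $u_0\ge 4\sigma^2\ln(N/\gamma)$.
  \item For the exponential tail, substitute $s=\sqrt u$. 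It integrates to $8FN(s_0+4F)e^{-s_0/(4F)}$ with $s_0=\sqrt{u_0}$. If $s_0$ is of order $4F\ln(N/\gamma)$, this is at most $8F\gamma(4F\ln(N/\gamma)+4F)=16\gamma(2F^2\ln(N/\gamma)+2F^2)$.
\end{itemize}
These two pieces account for the terms $\gamma\max\Bab{\sigma^2,2F^2\ln(N/\gamma)}$ and $2\gamma F^2$ in the target inequality. The resulting sum of the $\sigma^2$ and $F^2$ pieces must be merged into a single maximum, which requires care with constants. Taking the infimum over $\gamma$ at the end gives the stated form with the factor $16$.

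The main obstacle is the choice of $u_0$. The Gaussian regime needs $u_0\gtrsim 16\sigma^2\ln(N/\gamma)$, but the exponential regime as written needs $u_0\gtrsim 16F^2\ln^2(N/\gamma)$, and the latter term does not appear in the statement.

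\begin{itemize}
  \item \textbf{First attempt.} I will try to avoid the $F^2\ln^2$ term by using the split at $\sigma^2/F$ more carefully. The exponential tail only matters for $u\ge\sigma^4/F^2$. If $\sigma^2\ge 4F^2\ln(N/\gamma)$, then the threshold $u_0=16\sigma^2\ln(N/\gamma)$ already satisfies $\sqrt{u_0}\ge 4F\ln(N/\gamma)$, and the bound closes with exactly the stated constants.
  \item \textbf{Complementary case.} When $\sigma^2<4F^2\ln(N/\gamma)$, I will take $u_0=16\sigma^2\ln(N/\gamma)$ anyway. I will then bound the uncovered range $u_0\le u\le 16F^2\ln^2(N/\gamma)$ directly by the exponential tail, hoping it contributes at most $16\gamma\cdot 2F^2\ln(N/\gamma)$.
\end{itemize}

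This step is where the argument may fail. Without further restrictions, the expected maximum of sub-gamma variables genuinely scales like $F^2\ln^2 N$. If the step fails, I expect the inequality to hold only for $\gamma\gtrsim\ln(N/\gamma)$, where $\gamma\cdot 2F^2\ln(N/\gamma)$ dominates $F^2\ln^2(N/\gamma)$. That is the regime the paper actually uses in the proof of \zcref{lem:bound-pop-error}, with $\gamma=\ln N_\delta$, and I would verify the statement there.
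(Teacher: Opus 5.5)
\paragraph{Verdict.} Your route is the paper's own. The step you flag is a genuine gap, and it cannot be filled: in its intended sense the statement is false, and the paper's proof only passes that step through an algebra slip.

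\paragraph{Same approach, same obstacle.} Both you and the paper use tail integration, a union bound, and the Bernstein tail split into $e^{-u/(4\sigma^2)}$ and $e^{-\sqrt{u}/(4F)}$. The paper's threshold is $z^2=16\ln(N/\gamma)\max\{\sigma^2,F^2\ln(N/\gamma)\}$, which is your two requirements on $u_0$ combined. You are right that independence is never used; the paper in fact applies the lemma to the sums $\sum_i Z_{k,i}$, which are dependent across $k$. Your first case is fine. The complementary case fails. On $16\sigma^2\ln(N/\gamma)\le u\le 16F^2\ln^2(N/\gamma)$ the union bound is at least $\gamma$, and it is at least $1$ for $u\le 16F^2\ln^2 N$, so it cannot give the contribution you hope for.

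\paragraph{Counterexample.} The loss is real, not an artifact of the union bound. Take $Z_i=E_i-1$ with $E_i$ i.i.d.\ standard exponential. Then $\ln\mathbb{E}e^{\lambda Z_i}=-\lambda-\ln(1-\lambda)\le\lambda^2/(2(1-\lambda))$ for $\lambda\in(0,1)$, and $\le\lambda^2/2$ for $\lambda\in(-1,0)$, so $\sigma^2=F=1$. Since $\max_iZ_i^2\ge(\max_iZ_i)^2$, Jensen gives $\mathbb{E}\max_iZ_i^2\ge(\mathbb{E}\max_iZ_i)^2=(H_N-1)^2\sim\ln^2N$, with $H_N=\sum_{k\le N}1/k$. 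The right-hand side at $\gamma=1/\ln N$ is only $16\ln N\,(1+o(1))$. Concretely, for $N=\lceil e^{30}\rceil$ and $\gamma=1/30$ the right side is about $571$, while the left side exceeds $29^2=841$.

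\paragraph{The paper's error.} In its case $\sigma^2\le F^2\ln(N/\gamma)$, the paper rewrites the threshold term $16\ln(N/\gamma)\max\{\sigma^2,F^2\ln(N/\gamma)\}$ as $16\sigma^2\ln(N/\gamma)$. In that case it actually equals $16F^2\ln^2(N/\gamma)$. It also bounds the integral of a maximum of two tails by the maximum of the integrals, which costs only a harmless factor $2$.

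\paragraph{Literal hypothesis.} As typed, the hypothesis has $|Z_k|$ in the exponent. Jensen then gives $\mathbb{E}|Z_k|\le\lambda\sigma^2/(2(1-F\lambda))$ for every $\lambda\in(0,1/F)$, hence $Z_k=0$ almost surely. So the statement is true only vacuously: its right side is nonnegative for every $\gamma>0$ when $N\ge1$. That one-line argument is the only complete proof available. It is useless for the paper, which feeds the lemma a Bernstein bound on $\mathbb{E}e^{\lambda\sum_iZ_{k,i}}$ without absolute values.

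\paragraph{What you can prove, and the repair.} From your tail bound, for $0<\gamma\le N$ and $L=\ln(N/\gamma)$, your computation establishes
\[
\mathbb{E}\max_iZ_i^2\le 16L\max\{\sigma^2,F^2L\}+4\gamma\sigma^2+32\gamma F^2(L+1).
\]
This is the stated bound plus the missing $16F^2L^2$. Your fallback is the right repair. When $\gamma\ge L$ — as for the paper's choice $\gamma=\ln N_\delta$, with $\ln N_\delta\ge1$ — the extra term satisfies $16F^2L^2\le 8\gamma\cdot 2F^2L$. It is therefore absorbed into $\gamma\max\{\sigma^2,2F^2L\}$ at the cost of a larger absolute constant. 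So the downstream population-error lemma survives with changed constants, but the version with $\inf_{\gamma>0}$ does not.
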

\begin{proof}[Proof of \zcref{lem:maximal-finite-sum-squares}]
  The expectation can be expressed in terms of probability as
  \begin{align}
    \mathbb{E}_{\nu}\ab[\max_{k \in \Bab{1,...,N}} Z_k^2] = \int_0^\infty \mathbb{P}_{\nu}\ab[\max_{k \in \Bab{1,...,N}} Z_k^2 > t] dt.
  \end{align}
  For any $z > 0$, we have
  \begin{align}
    \int_0^\infty \mathbb{P}_{\nu}\ab[\max_{k \in \Bab{1,...,N}} Z_k^2 > t] dt \le& z^2 + \int_{z^2}^\infty \mathbb{P}_{\nu}\ab[\max_{k \in \Bab{1,...,N}} \ab|Z_k| > \sqrt{t}] dt \\
    =& z^2 + \int_{z^2}^\infty \mathbb{P}_{\nu}\ab[\max_{k \in \Bab{1,...,N}} \exp\ab(\lambda \ab|Z_k|) > \exp\ab(\lambda\sqrt{t})] dt \\
    \le& z^2 + \sum_{i=1}^N \int_{z^2}^\infty \mathbb{P}_{\nu}\ab[\exp\ab(\lambda \ab|Z_k|) > \exp\ab(\lambda\sqrt{t})] dt \\
    \le& z^2 + \sum_{i=1}^N  \int_{z^2}^\infty \inf_{\lambda \in (0, 1/F)}\mathbb{E}_{\nu}\ab[\exp\ab(\lambda \ab|Z_k|)]e^{-\lambda\sqrt{t}} dt \\
    \le& z^2 + N\int_{z^2}^\infty \inf_{\lambda \in (0, 1/F)}\exp\ab(\frac{\lambda^2\sigma^2}{2(1-F\lambda)}-\lambda\sqrt{t}) dt \\
    \le& z^2 + N\int_{z^2}^\infty \exp\ab(-\frac{t}{2(\sigma^2 + F \sqrt{t})}) dt,
  \end{align}
  where the fourth inequality uses Markov's inequality, and the last inequality follows from the standard proof for the Bernstein inequality. Regarding the integral part, we have
  \begin{align}
    \int_{z^2}^\infty \exp\ab(-\frac{t}{2(\sigma^2 + F \sqrt{t})}) dt \le& \max\Bab{\int_{z^2}^\infty\exp\ab(-\frac{t}{4\sigma^2}) dt, \int_{z^2}^\infty \exp\ab(-\frac{\sqrt{t}}{4F}) dt} \\
    =& \max\Bab{4\sigma^2e^{-z^2/4\sigma^2},\ab(8F z + 32F^2)e^{-z/4F}}.
  \end{align}
  Setting $z = 4\max\Bab{\sigma\ln^{1/2}(N/\gamma),F\ln(N/\gamma)}$, we have
  \begin{align}
    & \mathbb{E}_{\nu}\ab[\max_{i \in \Bab{1,...,N}} Z_i^2] \\
    \le& 16\ln(N/\gamma)\max\Bab{\sigma^2,F^2\ln(N/\gamma)} + \gamma\max\Bab{4\sigma^2,32F(\sigma\ln^{1/2}(N/\gamma) + F),32F(F\ln(N/\gamma) + F)}.
  \end{align}
  If $\sigma^2 \le F^2\ln(N/\gamma)$, we have
  \begin{align}
    & \mathbb{E}_{\nu}\ab[\max_{i \in \Bab{1,...,N}} Z_i^2] \\
    \le& 16\sigma^2\ln(N/\gamma) + \gamma\max\Bab{4F^2\ln(N/\gamma),32F(F\ln(N/\gamma) + F),32F(F\ln(N/\gamma) + F)} \\
    =& 16\sigma^2\ln(N/\gamma) + 32\gamma F(F\ln(N/\gamma) + F) \\
    =& 16\ab(\sigma^2 + 2\gamma F^2)\ln(N/\gamma) + 32\gamma F^2.
  \end{align}
  If $\sigma^2 > F^2\ln(N/\gamma)$, we have
  \begin{align}
    & \mathbb{E}_{\nu}\ab[\max_{i \in \Bab{1,...,N}} Z_i^2] \\
    \le& 16\sigma^2\ln(N/\gamma) + \gamma\max\Bab{4\sigma^2,32(\sigma^2 + F^2),32(\sigma^2 + F^2)} \\
    =& 16\sigma^2\ln(N/\gamma) + 32\gamma (\sigma^2 + F^2).
  \end{align}
  Hence, we have
  \begin{align}
    \mathbb{E}_{\nu}\ab[\max_{i \in \Bab{1,...,N}} Z_i^2] \le 16\sigma^2\ln(N/\gamma) + 16\gamma\max\Bab{\sigma^2,2F^2\ln(N/\gamma)} + 32\gamma F^2,
  \end{align}
  which is equivalent to the desired result.
\end{proof}

\section{Detailed Example: Compositional Space}\label{sec:example-detailed}
In this section, we present a detailed application of \zcref{thm:error-bound} to the compositional spaces introduced in \zcref{sec:example}. Recall that the compositional space is defined as
\begin{align}
  \mathcal{F}_H \circ ... \circ \mathcal{F}_1 = \Bab{f_H \circ ... \circ f_1 :  f_i \in \mathcal{F}_i, i = 1, ..., H}.
\end{align}
Below, we provide rigorous definitions for the compositional sparse H\"older space~\citep{schmidt-hieberNonparametricRegressionUsing2020,kohlerRateConvergenceFully2021a} and the compositional anisotropic Besov space~\citep{suzukiDeepLearningAdaptive2021}.

\paragraph{Compositional Sparse H\"older Space.}
The compositional sparse H\"older space corresponds to the case where each $\mathcal{F}_i$ is a sparse H\"older space. For a multi-index $\alpha \in \mathbb{N}^d$, we denote $\partial^\alpha = \partial^{\alpha_1} ... \partial^{\alpha_d}$ and $\ab|\alpha| = \Vab{\alpha}_1$. For $\beta > 0$, the H\"older space is defined as
\begin{align}
  H^\beta([0,1]^d) = \Bab{f: [0,1]^d\to[0,1] : \Vab*{f}_{H^\beta([0,1]^d)} < \infty}.
\end{align}
where $\Vab*{f}_{H^\beta([0,1]^d)} = \max_{\alpha \in \mathbb{N}^d : \ab|\alpha| \le \beta}\Vab*{\partial^\alpha f}_{L^\infty([0,1]^d)} + \max_{\alpha \in \mathbb{N}^d : \ab|\alpha| = \lfloor\beta\rfloor}|\partial^\alpha f|_{H^{\beta-\lfloor\beta\rfloor}([0,1]^d)}$,
\begin{align}
  |f|_{H^\beta([0,1]^d)} = \sup_{x,y \in (0,1)^d : x \ne y} \frac{\ab|f(x) - f(y)|}{\Vab{x - y}^\beta}.
\end{align}
The compositional sparse H\"older space is given by \zcref{eq:compositional-space} with
\begin{align}
  \mathcal{F}_i = \mathcal{H}_i = \Bab{f = (f_j)_j:[0,1]^{d_j}\to[0,1]^{d_{j+1}} : f_j \in H^{\beta_j}([0,1]^{t_j}), \Vab{f_j}_{H^{\beta_j}([0,1]^{t_j})} \le 1},
\end{align}
for some $\beta_j > 0$ and $1 \le t_j \le d_j$.

\paragraph{Compositional Anisotropic Besov Space.}
The compositional anisotropic Besov space arises when each $\mathcal{F}_i$ is an anisotropic Besov space.
For a function $f:[0,1]^d\to[0,1]$, the $r$th finite difference is defined recursively as
\begin{align}
  \Delta^r_h(f)(x) = \Delta^{r-1}_{h}(f)(x + h) - \Delta^{r-1}_{h}(f)(x), \Delta^0_h(x) = f(x),
\end{align}
where $x \in [0,1]^d$ and $x + rh \in [0,1]^d$, and otherwise $\Delta^r_h(f)(x) = 0$.
For a function $f \in L^p([0,1]^d)$ and $p \in (0, \infty]$, the $r$th modulus of smoothness of $f$ is defined as
\begin{align}
  w_{r,p}(f, t) = \sup_{h \in \mathbb{R}^d : |h_i| \le t_i} \Vab{\Delta^r_h(f)(x)}_{L^p([0,1]^d)},
\end{align}
where $t = (t_1, ..., t_d)^\top \in \mathbb{R}^d, t_i > 0$.
For $\beta = (\beta_1,...,\beta_d)^\top \in \mathbb{R}^d$, where $\beta_i > 0$, the anisotropic Besov space is defined by
\begin{align}
  B^\beta_{p,q}([0,1]^d) = \Bab*{f \in L^p : \Vab*{f}_{B^\beta_{p,q}([0,1]^d)} < \infty},
\end{align}
where $\Vab*{f}_{B^\beta_{p,q}([0,1]^d)} = \Vab*{f}_{L^p} + |f|_{B^\beta_{p,q}([0,1]^d)}$, and
\begin{align}
  |f|_{B^\beta_{p,q}([0,1]^d)} =
  \begin{dcases}
    \ab(\sum_{k=0}^\infty\ab(2^kw_{r,p}\ab(f, \ab(2^{-k/\beta_1}, ..., 2^{-k/\beta_d})))^q)^{1/q} & \text{if } q < \infty, \\
    \sup_{k \ge 0} 2^kw_{r,p}\ab(f, \ab(2^{-k/\beta_1}, ..., 2^{-k/\beta_d})) & \text{if } q = \infty.
  \end{dcases}
\end{align}
The compositional anisotropic Besov space is given by \zcref{eq:compositional-space} with
\begin{align}
  \mathcal{F}_i = \mathcal{B}_i = \Bab{f :[0,1]^{d_i} \to [0,1]^{d_{i+1}}, f_j \in B^{\beta_i}_{p,q}([0,1]^{d_i}), \Vab{f}_{B^{\beta_i}_{p,q}([0,1]^{d_i})} \le 1},
\end{align}
where $\beta_i > 0$.

\paragraph{Sparse ReLU Network.}
Sparse ReLU networks are used in~\cite{schmidt-hieberNonparametricRegressionUsing2020,kohlerRateConvergenceFully2021a,suzukiDeepLearningAdaptive2021} to effectively approximate the compositional spaces.
A ReLU network is a deep neural network with ReLU activation functions $\eta(x) = \max(0, x)$.
For a vector $x \in \mathbb{R}^d$, $\eta(x)$ is applied element-wise.
The class of ReLU networks with height $N_h$, width $N_w$, sparsity constraint $S$, and norm constraint $B$ is defined as
\begin{multline}
  \mathcal{F}_{\mathrm{ReLU}}(N_h, N_w, S, B) = \Bab[Bigg]{ (W^{(N_h)}\eta(\cdot)+b^{(L)})\circ ... \circ (W^{(1)}\eta(\cdot)+b^{(1)}) : \\ W^{(L)} \in \mathbb{R}^{1\times N_w}, b^{(L)} \in \mathbb{R}, W^{(1)} \in \mathbb{R}^{N_w\times d}, W^{(\ell)} \in \mathbb{R}^{N_w\times N_w}, b^{(\ell)} \in \mathbb{R}^{N_w}, \\ \sum_{\ell=1}^{N_h} \ab(\Vab{W^{(\ell)}}_\infty + \Vab{b^{(\ell)}}_\infty) \le S, \max_{\ell=1,...,N_h} \Vab{W^{(\ell)}}_\infty \lor \Vab{b^{(\ell)}}_\infty \le B}.
\end{multline}
The approximation error of $\mathcal{F}_H\circ...\circ\mathcal{F}_1$ with $H^{\beta_i}([0,1]^{t_i})$ and $B^{\beta_i}_{p,q}([0,1]^{d_i})$ by $\mathcal{F}_{\mathrm{ReLU}}(N_h, N_w, S, B)$, as well as the covering number of $\mathcal{F}_{\mathrm{ReLU}}(N_h, N_w, S, B)$, are established in the following theorems.
\begin{theorem}[\cite{schmidt-hieberNonparametricRegressionUsing2020}]\label{thm:complexity-of-comp-holder-space-by-relu-1}
  Define $\alpha_i = \prod_{\ell=i+1}^H\min\Bab{1, \beta_\ell}$. For each $J \in \mathbb{N}$, there exist appropriate choices of $N_h, N_w, S, B$ such that
  \begin{align}
    N(\delta, \mathcal{F}_{\mathrm{ReLU}}(N_h, N_w, S, B), \Vab{\cdot}_{L^\infty([0,1]^{d_1})}) \le CJ\mathrm{polylog}(J, 1/\delta),
  \end{align}
  and for any $f^* \in \mathcal{H}_H\circ...\circ\mathcal{H}_1$,
  \begin{align}
    \inf_{f \in \mathcal{F}_{\mathrm{ReLU}}(N_h, N_w, S, B)} \Vab{f - f^*}_{L^\infty([0,1]^{d_1})} \le C'J^{-\min_{i=1,...,H}\frac{\alpha_i\beta_i}{t_i}},
  \end{align}
  for some constants $C, C' > 0$.
\end{theorem}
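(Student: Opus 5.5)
The plan is to reassemble the two ingredients from \citet{schmidt-hieberNonparametricRegressionUsing2020}: an approximation theorem for a single H\"older function by sparse ReLU networks, and an entropy bound for sparse ReLU networks. These are then glued along the compositional structure. The statement is parameterized by $J$ and we only need existence of $(N_h, N_w, S, B)$. So I will fix $B=1$ and, for the $j$-th component of layer $i$, aim for a subnetwork with $O(J\log J)$ nonzero parameters and depth $O(\log J)$. The covering bound is read as a bound on $\ln N$, as in \zcref{def:beta-complexity}.

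\textbf{Step 1 (single component).} By Schmidt-Hieber's Theorem 5, each $f_{ij}\in H^{\beta_i}([0,1]^{t_i})$ with norm at most $1$ is uniformly approximated by a ReLU network $\tilde f_{ij}$ to accuracy $\epsilon_i\lesssim M^{-\beta_i/t_i}$. This network has depth $O(\log M)$, width $O(M)$, sparsity $O(M\log M)$ and weights bounded by $1$. For each layer $i$ I choose $M_i = J$, so the per-layer error is $\epsilon_i \lesssim J^{-\beta_i/t_i}$. The output is clipped to $[0,1]$ with two extra ReLU units so that it can be fed into the next layer's domain.

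\textbf{Step 2 (composition).} I run the telescoping argument from Schmidt-Hieber's Lemma 3. Write $f^*=h_H\circ\cdots\circ h_1$ and $\tilde f=\tilde h_H\circ\cdots\circ\tilde h_1$. The composition $h_H\circ\cdots\circ h_{i+1}$ is H\"older with exponent $\alpha_i=\prod_{\ell>i}\min\{1,\beta_\ell\}$, with constants depending only on the $d_\ell$ and $\beta_\ell$. Therefore
\begin{align}
\Vab{\tilde f - f^*}_{L^\infty} \le C\sum_{i=1}^H \Vab{\tilde h_i - h_i}_{L^\infty}^{\alpha_i} \lesssim \sum_i J^{-\alpha_i\beta_i/t_i} \le C' J^{-\min_i \alpha_i\beta_i/t_i}.
\end{align}
Stacking the $H$ layers of $d_{i+1}$ parallel subnetworks gives a network with $N_h=O(\log J)$, $N_w=O(J)$, $S=O(J\log J)$ and $B=1$. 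Smaller subnetworks are padded with identity layers, whose ReLU implementation is $x=\eta(x)$ on $[0,1]$.

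\textbf{Step 3 (entropy).} Schmidt-Hieber's Lemma 5 gives
\begin{align}
\ln N(\delta,\mathcal{F}_{\mathrm{ReLU}}(N_h,N_w,S,1),\Vab{\cdot}_{L^\infty}) \le (S+1)\ln\bigl(2\delta^{-1}(N_h+1)(N_w+1)^{2N_h}\bigr).
\end{align}
Substituting the sizes from Step 2 yields $O(J\log J\cdot(\log(1/\delta)+\log^2 J)) = J\,\mathrm{polylog}(J,1/\delta)$. This is the required covering bound.

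\textbf{Main obstacle.} The delicate part is Step 2. The H\"older constants of the partial compositions must stay uniform, and each approximating layer must map into the domain of the next layer; this is why the clipping in Step 1 is needed. The sparsity and depth overhead of the clipping and identity-padding must also remain $O(J\log J)$ and $O(\log J)$. If the H\"older propagation with exponent $\min\{1,\beta_\ell\}$ fails to be uniform for $\beta_\ell>1$, I would fall back on the Lipschitz bound, since the $C^1$ norm is at most $1$ there.
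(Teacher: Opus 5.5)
Your proposal is correct and follows essentially the same route as the result the paper invokes: the paper gives no proof of its own and only cites Schmidt-Hieber, whose argument is exactly your assembly of his Theorem 5 for each component, clipping together with the H\"older-propagated telescoping bound of his Lemma 3, parallelization with depth synchronization at weight bound $B=1$, and the sparse-network entropy bound of his Lemma 5. Reading the covering estimate as a bound on $\ln N$, consistent with the definition of $\beta$-complexity, is the right interpretation, since the covering number itself grows exponentially in the number of nonzero parameters.
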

\begin{theorem}[\cite{suzukiDeepLearningAdaptive2021}]\label{thm:complexity-of-comp-anis-besov-space-by-relu}
  Define $\tilde{\beta}^{(\ell)} = (\sum_{j=1}^d 1/\beta^{(\ell)}_j)^{-1}$ and $\alpha_i = \prod_{k=\ell+1}^H(\min\Bab*{1, (\beta^{(\ell)}_{\min} - 1/p)})$, where $\beta^{(\ell)}_{\min} = \min_{j=1,...,d} \beta^{(\ell)}_j$.
  Assume that $\tilde{\beta}^{(\ell)} > 1/p$ for all $\ell$.
  For each $J \in \mathbb{N}$, there exist appropriate choices of $N_h, N_w, S, B$ such that
  \begin{align}
    N(\delta, \mathcal{F}_{\mathrm{ReLU}}(N_h, N_w, S, B), \Vab{\cdot}_{L^\infty(\lambda)}) \le CJ\ln(J)\ab(\ln^2(J) + \ln(1/\delta)),
  \end{align}
  and for any $f^* \in \mathcal{B}_H\circ...\circ\mathcal{B}_1$,
  \begin{align}
    \inf_{f \in \mathcal{F}_{\mathrm{ReLU}}(N_h, N_w, S, B)} \Vab{f - f^*}_{L^\infty(\lambda)} \le C'J^{-\min_{\ell \in \Bab{1,...,H}} \alpha_i\tilde{\beta}^{(\ell)}},
  \end{align}
  for some constants $C, C' > 0$.
\end{theorem}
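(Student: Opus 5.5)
The plan is to follow the layerwise strategy of \citet{suzukiDeepLearningAdaptive2021}, which parallels \zcref{thm:complexity-of-comp-holder-space-by-relu-1}: approximate each component $f^{(\ell)}\in\mathcal{B}_\ell$ separately by a sparse ReLU subnetwork, stack the subnetworks, and control the error of the composition through the H\"older regularity of the downstream components. Fix $J\in\mathbb{N}$; the goal is to build networks with $O(J\,\mathrm{polylog}(J))$ nonzero parameters, depth $O(\ln J)$, and weights bounded by $\mathrm{poly}(J)$.

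\emph{Step 1 (single-layer approximation).} I would expand each coordinate of $f^{(\ell)}\in B^{\beta^{(\ell)}}_{p,q}([0,1]^{d})$ in anisotropic tensor-product cardinal B-splines, with dyadic resolution $2^{-k/\beta^{(\ell)}_j}$ in direction $j$. I would then keep all coefficients up to a level $K$ and, beyond $K$, only the $J$ largest coefficients. The resulting $J$-term nonlinear approximation should have $L^\infty$ error $J^{-(\tilde{\beta}^{(\ell)}-1/p)}$, up to logarithms, and this needs $\tilde{\beta}^{(\ell)}>1/p$. Each B-spline is a piecewise polynomial, so it can be emulated to accuracy $J^{-c}$ by a ReLU network of size $O(\mathrm{polylog}\,J)$ using the usual $x^2$-via-sawtooth multiplication trick. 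The parallel sum of the retained terms then gives a network of size $O(J\,\mathrm{polylog}\,J)$. A clipping layer keeps the outputs inside $[0,1]^{d_{\ell+1}}$, so the next layer only ever sees valid inputs.

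\emph{Step 2 (composition).} By the Besov embedding, each component is $\min\{1,\beta^{(\ell)}_{\min}-1/p\}$-H\"older continuous. A telescoping argument,
\[
\Vab{\hat f_H\circ\cdots\circ\hat f_1 - f_H\circ\cdots\circ f_1}_{L^\infty}\le \sum_{\ell}C\,\Vab{\hat f_\ell - f_\ell}_{L^\infty}^{\alpha_\ell},
\]
where $\alpha_\ell$ is the product of the H\"older exponents of the later layers, yields the rate $J^{-\min_\ell \alpha_\ell\tilde\beta^{(\ell)}}$ once the per-layer sizes are balanced. If the $-1/p$ loss from Step 1 is not absorbed there, one refines Step 1 using the extra smoothness, as Suzuki does.

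\emph{Step 3 (covering number).} I would apply the standard parameter-counting bound of \citet{schmidt-hieberNonparametricRegressionUsing2020}:
\[
\ln N(\delta,\mathcal{F}_{\mathrm{ReLU}},\Vab{\cdot}_{L^\infty})\le (S+1)\ln\bigl(2\delta^{-1}(N_h+1)V^2\bigr),
\]
where $V=\prod_\ell(N_w+1)$ with the weight bound $B$ folded in. Substituting $S=O(J\ln J)$, $N_h=O(\ln J)$ and $B=\mathrm{poly}(J)$ gives $CJ\ln J(\ln^2 J+\ln(1/\delta))$.

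I expect Step 1 to be the main obstacle. Obtaining a sup-norm rate that is uniform over the Besov ball when $p<\infty$ requires the adaptive coefficient selection, and that selection must be encoded in a fixed architecture with bounded weights. Everything else is routine bookkeeping.
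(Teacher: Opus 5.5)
The paper gives no proof of this statement; it imports it from \citet{suzukiDeepLearningAdaptive2021}. Your outline follows that source's layerwise route, but two steps do not hold as written.

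\textbf{Step 1 has the wrong rate.} $J^{-(\tilde\beta^{(\ell)}-1/p)}$ is the rate of plain truncation at a single level. Fed into Step~2, it proves only $J^{-\min_\ell\alpha_\ell(\tilde\beta^{(\ell)}-1/p)}$, not the stated exponent. Your fallback, ``refine Step~1 using the extra smoothness,'' points to the wrong mechanism. The $1/p$ is recovered by the adaptive selection itself, and that computation is the real content of the cited theorem, so it has to be carried out. Index levels so that level $k$ has resolution $2^{-k\tilde\beta/\beta_j}$ in direction $j$. Then level $k$ carries about $2^k$ B-splines, only $O(1)$ of which are nonzero at any point, and its coefficients satisfy $\|\alpha_k\|_{\ell^p}\le C2^{-k(\tilde\beta-1/p)}$. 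Keep all coefficients at levels $k\le K$. At levels $K<k\le K^*:=\lceil K\tilde\beta/(\tilde\beta-1/p)\rceil$, keep the $n_k=\lceil 2^{K-\epsilon(k-K)}\rceil$ largest. Drop everything beyond $K^*$. The level-$k$ sup-norm residual is then at most $C2^{-k(\tilde\beta-1/p)}n_k^{-1/p}\le C2^{-K\tilde\beta}2^{-(k-K)(\tilde\beta-(1+\epsilon)/p)}$, which sums to $O(2^{-K\tilde\beta})$ for $0<\epsilon<p\tilde\beta-1$. By the choice of $K^*$, the tail beyond $K^*$ is $O(2^{-K^*(\tilde\beta-1/p)})=O(2^{-K\tilde\beta})$. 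Only $O(2^K)$ terms are retained, so $J=2^K$ gives error $J^{-\tilde\beta}$; this is exactly where $\tilde\beta>1/p$ enters.

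Incidentally, the obstacle you flag at the end is not one. $\mathcal{F}_{\mathrm{ReLU}}(N_h,N_w,S,B)$ fixes only a sparsity budget and a weight bound, not a support pattern, so each target may use its own selected B-splines. Their dilations and shifts are weights of size at most $2^{K^*}=\mathrm{poly}(J)$. Given $O(\ln J)$ parameters per emulated B-spline, your Step~3 count is fine.

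\textbf{The embedding in Step 2 fails in general once $d\ge2$,} so the telescoping bound cannot deliver the stated $\alpha$. Take the isotropic case $d=2$, $\beta=(6/5,6/5)$, $p=q=2$. Then $\tilde\beta=3/5>1/p$ and $\beta_{\min}-1/p=7/10$. Yet a suitably scaled $\chi(x)|x-x_0|^{3/10}$, with $\chi$ a smooth cutoff and $x_0$ interior, lies in $B^{6/5}_{2,2}=H^{6/5}$, since $|x|^a\in H^s_{\mathrm{loc}}(\mathbb{R}^2)$ for $a>s-1$. This function is only $3/10$-H\"older at $x_0$.

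The correct (Nikol'skii) embedding is $B^{\beta}_{p,q}\hookrightarrow B^{\beta'}_{\infty,q}$ with $\beta'_j=\beta_j(1-1/(p\tilde\beta))$. It yields the H\"older exponent $\min\{1,\beta_{\min}(1-1/(p\tilde\beta))\}$. For $d\ge2$ we have $\tilde\beta<\beta_{\min}$, hence $\beta_{\min}(1-1/(p\tilde\beta))<\beta_{\min}-1/p$.

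Your telescoping inequality itself is sound. With the correct exponents, however, it proves the rate with $\alpha_\ell=\prod_{k>\ell}\min\{1,\beta^{(k)}_{\min}(1-1/(p\tilde\beta^{(k)}))\}$, which is in general strictly weaker than what is stated. So either the statement's exponent, whose indices are already garbled, mis-transcribes the source, or it needs an argument that does not rely on H\"older continuity of the outer layers. Check it against \citet{suzukiDeepLearningAdaptive2021} rather than assert the embedding.
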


\paragraph{Error without a pre-trained model.}
By combining \zcref{thm:complexity-of-comp-holder-space-by-relu-1} or \zcref{thm:complexity-of-comp-anis-besov-space-by-relu} with \zcref{thm:error-bound}, we obtain the following error rate:
\begin{align}
  \mathbb{E}_{Q^n}[E_Q(f_n)] \le C \max_{i=1,...,H}n^{-\gamma_i}\mathrm{polylog}(n), \label{eq:error-bound-without-pre-trained-model-A}
\end{align}
where $\gamma_i = \frac{2\alpha_i\beta_i}{2\alpha_i\beta_i + t_i}$ for the compositional sparse H\"older space and $\gamma_i = \frac{\alpha_i\tilde{\beta}^{(\ell)}}{2\alpha_i\tilde{\beta}^{(\ell)} + 1}$ for the compositional anisotropic Besov space.

\paragraph{Error with a pre-trained model (caulking).}
Now, consider the scenario where a pre-trained model $f_{\mathrm{pre}}$ is available, and $f^*$ is $(1/n, \mathcal{F}_{i_h}\circ\cdots\circ\mathcal{F}_{i_e})$-caulkable by $f_{\mathrm{pre}} = (g_{h,m}, g_{e,m})$ for some $1 < i_e < i_h < H$, with $\mathcal{F}_H\circ...\circ\mathcal{F}_1$ being either the compositional sparse H\"older space or the compositional anisotropic Besov space. The complexity of the class $\mathcal{F}_{i_h}\circ...\circ\mathcal{F}_{i_e}$ is also established by \zcref{thm:complexity-of-comp-holder-space-by-relu-1} or \zcref{thm:complexity-of-comp-anis-besov-space-by-relu}. Consequently, we have
\begin{align}
  \mathbb{E}_{Q^n}[E_Q(f_n)] \le C \max_{i=i_e,...,i_h}n^{-\gamma_i}\mathrm{polylog}(n), \label{eq:error-bound-with-pre-trained-model-A}
\end{align}
Comparing \zcref{eq:error-bound-without-pre-trained-model-A} and \zcref{eq:error-bound-with-pre-trained-model-A}, we observe that the range of the index $i$ is restricted to a narrower subset by leveraging a pre-trained model, demonstrating the improved error rate achievable through empirical caulking with a pre-trained model.

\section{Details of Experiments}\label{app:sec:experimental_details}

\subsection{Finetuning of CNNs}

We use ResNet-50 (26M parameters, \cite{he2016deep}) and Wide ResNet-50-2 (68M parameters, \cite{zagoruyko2016wide}), pre-trained on ImageNet.
Their BatchNorm layers are replaced with GroupNorm~\citep{wu2018group}.
These CNNs with GroupNorm are trained on other domains than the target domain (clipart) of Office-Home dataset~\citep{venkateswaraDeepHashingNetwork2017}  and then finetune on the target domain.
Each domain is split into 80\% for training and 20\% for validation.

When fine-tuning, adapters are inserted in between the image-feature extractor and the last classifier.
Adapters are multi-layer perceptron with the ReLU activation.
We adopt SGD with a learning rate of $10^{-3}$, a momentum of $0.9$, a weight decay of $10^{-4}$ for pre-training on the other domain and AdamW with a learning rate of $10^{-5}$ and a weight decay of $10^{-2}$ for fine-tuning of the adapters.
At each stage, the models were updated for 50 epochs.
A single NVIDIA's H100 GPU in our internal cluster was used to run each trial.
The best results on the validation split over five different random seeds were reported.

The attached script is to reproduce the experiments. To run it, \texttt{uv}\footnote{See \url{https://docs.astral.sh/uv/} for the installation instruction.} is required.

\subsection{Integrating Vision Capabilities to LLMs}

We use \texttt{nanovlm v0.2}\footnote{\url{https://github.com/huggingface/nanoVLM/tree/v0.2}} for this experiment and follow its settings other than our problem-specific settings described below.
A Llama-3-style Transformer model, \texttt{SmolLM2-135M}~\citep{allal2024smollm}, is adopted as the base language model, and the vision encoders of SigLIP2~\cite{tschannen2025siglip2}, \texttt{siglip2-base-patch16-224} and \texttt{siglip2-base-patch16-224}, are used as the visual feature extractors.
The extracted visual features are projected to the language embedding space after applying pixel shuffle and adapters.

The adapters are optimized with AdamW with a learning rate of $10^{-2}$ and a weight decay of $10^{-2}$, and the language model is also optimized with AdamW with a learning rate of $10^{-5}$ and a weight decay of $10^{-2}$ for 64k iterations.
The weights of visual feature extractors are fixed.
To run each trial, we used eight H100 GPUs in our internal cluster.
The best test accuracy on MMStar was reported.

\end{document}